\documentclass{article} 
\usepackage{iclr2024_conference,times}

\usepackage{xcolor}
\iclrfinalcopy

\usepackage[american]{babel}
\usepackage{amsfonts}
\usepackage{bbm}
\usepackage{natbib}
\usepackage{subfig}
\usepackage{algorithm}
\usepackage{apptools}
\AtAppendix{\counterwithin{lemma}{section}}
\AtAppendix{\counterwithin{lem}{section}}

\let\classAND\AND
\let\AND\relax
\usepackage[noend]{algorithmic}

\let\AND\classAND
\AtBeginEnvironment{algorithmic}{\let\AND\algoAND}

\usepackage{booktabs}       
\usepackage{float}
\usepackage{fnpos}
\usepackage{ftnxtra}
\usepackage{graphicx}
\usepackage{microtype}      
\usepackage{nicefrac}       
\usepackage[textsize=scriptsize]{todonotes}
\usepackage{xspace}
\usepackage{mathtools}
\usepackage{thmtools}
\usepackage{dsfont}
\usepackage{wrapfig}
\usepackage{amssymb}
\usepackage{enumitem}

\setlength{\pdfpageheight}{11in}
\setlength{\pdfpagewidth}{8.5in}

\newtheorem{theorem}{Theorem}
\newtheorem{lemma}{Lemma}

\newtheorem{assumption}{Assumption}

\declaretheorem[name=Lemma]{lem}

\newenvironment{proof}{\emph{Proof:}}{\hfill$\square$}

\bibliographystyle{plainnat}

\title{Constrained Bayesian Optimization with Adaptive Active Learning of Unknown Constraints}

%

\author{%
        Fengxue Zhang \\
        Dept. of Computer Science\\
        The University of Chicago\\
        Chicago, Illinois, USA\\
        \texttt{zhangfx@uchicago.edu} \\
        \And
        Zejie Zhu                    \\
        Dept. of Statistics         \\
        The University of Chicago   \\
        Chicago, Illinois, USA      \\
        \texttt{zejie.zhuu@gmail.com}
        \And
        Yuxin Chen\\
        Dept. of Computer Science\\
        The University of Chicago\\
        Chicago, Illinois, USA\\
        \texttt{chenyuxin@uchicago.edu} \\
}


\usepackage{amsmath,amsfonts,bm}



\def\figref#1{figure~\ref{#1}}
\def\Figref#1{Figure~\ref{#1}}

\def\secref#1{section~\ref{#1}}



\def\eqref#1{equation~\ref{#1}}

\def\plaineqref#1{\ref{#1}}



\def\algref#1{algorithm~\ref{#1}}



\def\1{\bm{1}}










\DeclareMathAlphabet{\mathsfit}{\encodingdefault}{\sfdefault}{m}{sl}
\SetMathAlphabet{\mathsfit}{bold}{\encodingdefault}{\sfdefault}{bx}{n}


\def\gG{{\mathcal{G}}}












\DeclareMathOperator*{\argmax}{arg\max}


\newcommand{\defeq}{\triangleq}


\newcommand{\appref}[1]{Appendix \ref{#1}}
\newcommand{\thmref}[1]{Theorem~\ref{#1}}

\newcommand{\lemref}[1]{Lemma~\ref{#1}}

\newcommand{\assref}[1]{Assumption~\ref{#1}}

\newcommand{\algoref}[1]{Algorithm~\ref{#1}}


\newcommand{\paren} [1] {\ensuremath{ \left( {#1} \right) }}

\newcommand{\bracket}[1]{\left[#1\right]}



\newcommand{\reals}{\ensuremath{\mathbb{R}}}

\newcommand{\normal}[0]{\mathcal{N}}


\renewcommand{\Pr}[1]{\ensuremath{\mathbb{P}\left[#1\right] }}

\newcommand{\mutualinfo}[1]{\mathbb{I}\paren{#1}}




\newcommand{\by}{{\mathbf{y}}}

\usepackage{ifthen}

\newboolean{showcomments}
\setboolean{showcomments}{true}


\newcommand{\algname}{\textsf{Algname}\xspace}

\newcommand{\maxInfo}{\gamma}


\definecolor{ultramarine}{RGB}{24,13,191}

\newif\iffinal

\newif\iffinal
\iffinal
    \newcommand{\yuxin}[1]{}
    \newcommand{\yuxinil}[1]{}
    \newcommand{\fengxue}[1]{}
    \newcommand{\tony}[1]{}
    \newcommand{\TBD}[1]{}
    
\else
    \setlength{\marginparsep}{0.2cm}
    \setlength{\marginparwidth}{3.5cm}
    \newcommand{\yuxin}[1]{\todo[fancyline,color=purple!40]{YC: #1}\xspace}
    \newcommand{\yuxinil}[1]{\textbf{\textcolor{blue}{[YC: #1]}}}

    \newcommand{\tony}[1]{\todo[inline,color=green!40]{TN: #1}\xspace}
    \newcommand{\fengxue}[1]{\todo[fancyline,color=red!40]{FZ: #1}\xspace}
    \newcommand{\TBD}[1]{{{[{\bf TD:} \color{purple}#1]}{}}}
\fi

\renewcommand{\algname}{\textsc{COBALt}\xspace}

\newcommand{\UCB}{\textsc{UCB}\xspace}

\newcommand{\instance}[0]{\ensuremath{\mathbf{x}}}
\newcommand{\kSpace}[0]{\ensuremath{\mathbf{K}}}
\newcommand{\cFunc}[0]{\ensuremath{\mathcal{C}}}

\newcommand{\GramMat}[0]{\ensuremath{\mathbf{K}}}
\newcommand{\Selected}[0]{\ensuremath{\mathbf{D}}}

\newcommand{\searchSpace}[0]{\ensuremath{\mathbf{X}}}
\newcommand{\roi}[0]{{\ensuremath{\hat{\searchSpace}}}}
\newcommand{\LCB}[0]{{\ensuremath{\textrm{LCB}}}}

\newcommand{\UCBit}[0]{{\ensuremath{\textrm{UCB}}}\xspace}
\newcommand{\acqF}[0]{{\ensuremath{\alpha_{f, t}}}}
\newcommand{\acqC}[0]{{\ensuremath{\alpha_{\cFunc_k, t}}}}

\newcommand{\discreteSet}[0]{\ensuremath{\Tilde{D}}}
\newcommand{\actionSet}[0]{\ensuremath{\textit{A}}}
\newcommand{\maxInfoDe}[0]{\ensuremath{\widetilde{\maxInfo}}}

\newcommand{\globalf}{\ensuremath{f}}

\newcommand{\reward}[0]{\ensuremath{r}}
\newcommand{\regret}[0]{\ensuremath{\mathbf{R}}}
\newcommand{\GP}[0]{\ensuremath{\mathcal{GP}}\xspace}
\newcommand{\discreteROI}[0]{\ensuremath{\discreteSet_{\roi_t}}}

\begin{document}
\maketitle

\begin{abstract}
Optimizing objectives under constraints, where both the objectives and constraints are black box functions, is a common scenario in real-world applications such as the design of medical therapies, industrial process optimization, and hyperparameter optimization. One popular approach to handling these complex scenarios is Bayesian Optimization (BO). 
In terms of theoretical behavior, BO is relatively well understood in the unconstrained setting, where its principles have been well explored and validated. However, when it comes to constrained Bayesian optimization (CBO), the existing framework often relies on heuristics or approximations without the same level of theoretical guarantees. 
In this paper, we delve into the theoretical and practical aspects of constrained Bayesian optimization, where the objective and constraints can be independently evaluated and are subject to noise.
By recognizing that both the objective and constraints can help identify high-confidence \textit{regions of interest} (ROI), we propose an efficient CBO framework that intersects the ROIs identified from each aspect to determine the general ROI. The ROI, coupled with a novel acquisition function that adaptively balances the optimization of the objective and the identification of feasible regions, enables us to derive rigorous theoretical guarantees for its performance. We showcase the efficiency and robustness of our proposed CBO framework through extensive empirical evidence and discuss the fundamental challenge of deriving practical regret bounds for CBO algorithms.
\end{abstract}

\section{Introduction}

Bayesian optimization (BO) has been widely studied as a powerful framework for expensive black-box optimization tasks in machine learning, engineering, and science in the past decades. Additionally, many real-world applications often involve black-box constraints that are costly to evaluate. Examples include choosing from a plethora of untested medical therapies under safety constraints \citep{sui2015safe}; determining optimal pumping rates in hydrology to minimize operational costs under constraints on plume boundaries  
\citep{gramacy2016modeling}; or tuning hyperparameters of a neural network under memory constraints \citep{gelbart2014bayesian}.
To incorporate the constraints into the BO framework, it is common to model constraints analogously to the objectives via Gaussian processes (GP) and then utilize an acquisition function to trade off the learning and optimization to decide subsequent query points.

Over the past decade, advancements have been made in several directions trying to address constrained BO (CBO). However, limitations remain in each direction. For instance, extended Expected Improvement \citep{gelbart2014bayesian, gardner2014bayesian} approaches learn the constraints passively and can lead to inferior performance where the feasibility matters more than the objective optimization. The augmented lagrangian (AL) methods \citep{gramacy2016modeling, picheny2016bayesian, ariafar2019admmbo} convert constrained optimization into unconstrained optimization with additional hyperparameters that are decisive for its performance while lacking theoretical guidance on the choice of its value. The entropy-based methods \citep{takeno2022sequential} rely heavily on approximation and sampling, which violates the theoretical soundness of the method. In general, the current methods extend the unconstrained BO methods with approximation or heuristics to learn the constraints and optimize the objective simultaneously, lacking a rigorous performance guarantee as in the Bayesian optimization tasks without the need to learn unknown constraints. 
\begin{figure*}[t]
    \vspace{-.6cm}
    \centering
        {
      \includegraphics[trim={.2cm 9cm 2cm 5.0cm}, width=.95\textwidth]{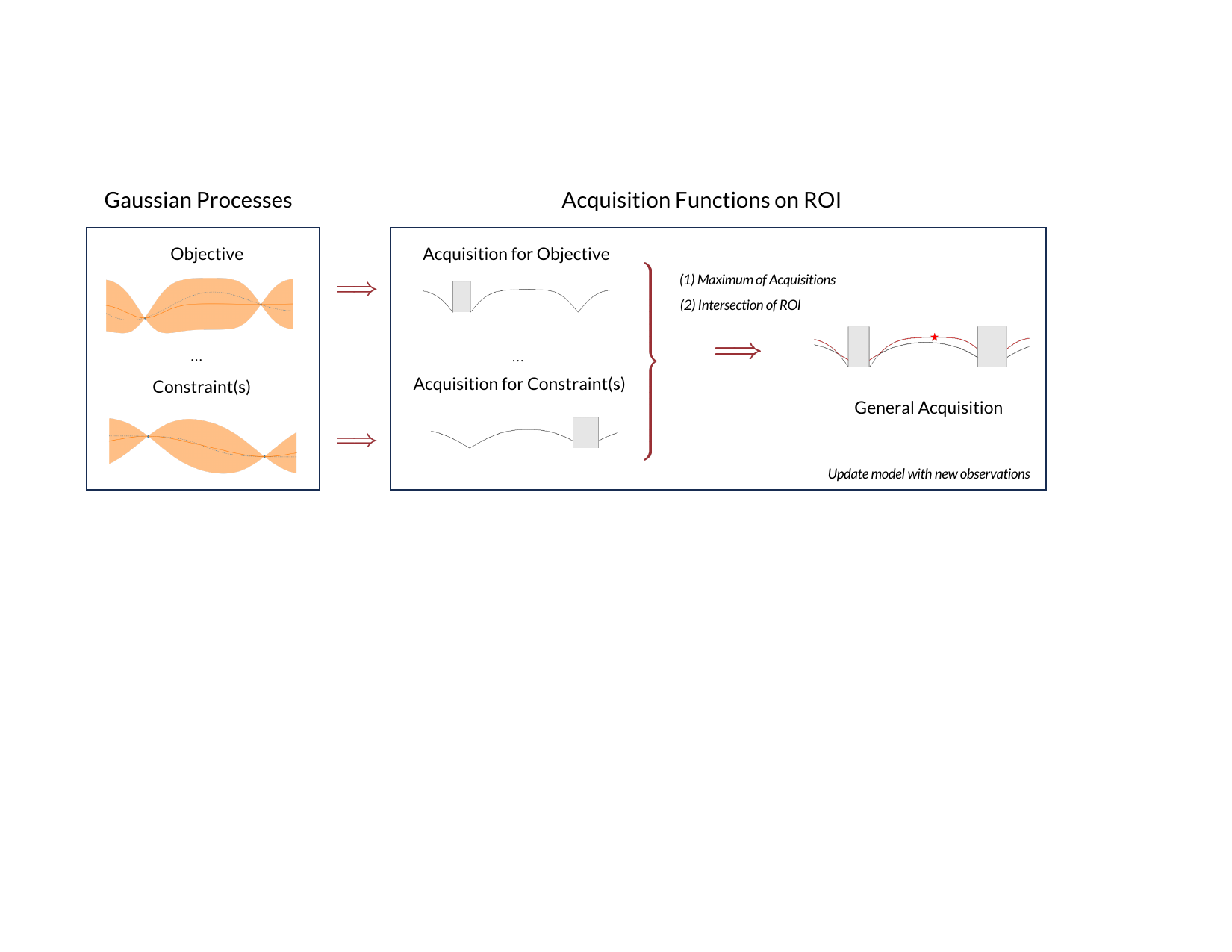}
      
    }
    \vspace{-.8cm}
    \caption{Pipeline of proposed algorithm \algname. In the left box, we maintain a Gaussian process as the surrogate model for the unknown objective and each constraint. The dotted curve shows the actual function, the red curve shows the predicted mean, and the shaded area denotes the confidence interval. In the right box, we first derive the acquisitions from each Gaussian process defined on a corresponding region of interests and define the general acquisition function by combining them all. Each time, the algorithm maintains the model, maximizes the general acquisition function to pick the candidate to evaluate, and then updates the model with the new observation. In later sections, we will elaborate on the filtered gray gap in the acquisition.
    }   
  \vspace{-.5cm}
  \label{fig:pipeline}
\end{figure*}

The challenge and necessity of learning the unknown constraints for constrained BO motivate us to resort to active learning methods dealing with unknown constraints. Such methods have been studied under active learning for level-set estimation (AL-LSE). Much like BO, AL-LSE models the black-box function through a Gaussian process and pursues optimization via sequential queries. 
However, the distinction lies in the objectives: while BO focuses on finding the maximizer of an objective, AL-LSE seeks to classify points in the domain as lying above or below a specified threshold. This setting is particularly relevant in applications where a desirable region, rather than a single optimal point, is sought, as seen in environmental monitoring and sensor networks \citep{gotovos2013active}. Recent approaches, such as truncated variance reduction \citep{bogunovic2016truncated} and an information-theoretic framework \citep{nguyen2021information}, aim to unify the theories of both sides. The unification inspires us to design a framework that adaptively balances the active feasible region identification and the unknown objective optimization.

In this paper, we propose a novel framework that integrates AL-LSE with BO for constrained Bayesian optimization. Our approach leverages the theoretical advantages of both paradigms, allowing for a rigorous performance analysis of the CBO method. 
A brief illustration of the framework design is shown in \figref{fig:pipeline}. The subsequent sections of this paper are structured as follows. In section 2, we provide a detailed overview of recent advancements in various facets of CBO. In section 3, we delve into the problem statement and discuss the definition of a probabilistic regret as a performance metric that enables rigorous performance analysis. In sections 4 and 5, we propose the novel CBO framework and offer the corresponding performance analysis. In section 6, we provide empirical evidence for the efficacy of the proposed algorithm. In section 7, we reflect on the key takeaways of our framework and discuss its potential implications for future work. 
\section{Related Work}
\paragraph{Feasibility-calibrated unconstrained methods.} While the majority of the research in Bayesian optimization (BO) is concerned with unconstrained problems as summarized by \citet{frazier2018tutorial}, there exists works that also consider black-box constraints. The pioneering work by \citet{schonlau1998global} first extended Expected Improvement (EI) to constrained cases by defining at a certain point the product of the expected improvement and the probability of the point being feasible. Later, this cEI algorithm was further advanced by \citet{gelbart2014bayesian} and \citet{gardner2014bayesian}, and the noisy setting was studied by \citet{letham2019constrained} using Monte Carlo integration while also introducing batch acquisition. 
The posterior sampling method (Thompson sampling) is extended to scalable CBO (SCBO) by \citet{eriksson2021scalable}. 
The proposed SCBO algorithm converts the original observation with a Gaussian copula. It addresses the heteroskedasticity of unknown functions over the large search space with axis-aligned trust region, extending the TuRBO \cite{eriksson2019scalable} in the unconstrained BO with additional sampling from the posterior of the constrained functions to weight the samples of the objective. The problem with the feasibility-calibrated methods is the scenario with no feasible point in the search space. Workarounds, including maximizing the feasibility or minimizing the constraint violation, are introduced, damaging the soundness of the algorithm.

\paragraph{Information-based criterion} 
There has been a recent surge of interest in the field toward the information-theoretic framework within BO. 
Predictive entropy search (PES) \citep{hernandez2014predictive} was extended to constraints (PESC) and detailed the entropy computations by \citet{hernandez2015predictive}. Later, max-value entropy search was proposed by \cite{wang2017max} to address the PES's expensive computations for estimating entropy, and the constrained adaptation was developed by \citet{perrone2019constrained}. More recently, variants of the MES were developed, such as the methods based on a lower bound of mutual information (MI), which guarantees non-negativity. \citep{takeno2022sequential} However, approximations, including sampling and variational inference, are introduced due to the intrinsic difficulty of the direct estimation of the entropy. Despite the strong empirical performance, the theoretical guarantee of these CBO extensions of entropy methods remains an open challenge.

\paragraph{Additive-structure methods}
Another direction in this field incorporates BO into the augmented Lagrangian framework by placing the constraint into the objective function and reducing the problem into solving a series of unconstrained optimization tasks using vanilla BO \citep{gramacy2016modeling}. The idea was developed further by \citet{picheny2016bayesian} using slack variables to convert mixed constraints to only equality constraints, which achieve better performance, and by \citet{ariafar2019admmbo} using ADMM algorithm to tackle an augmented Lagrangian relaxation. Similarly, in the risk-averse BO setting studied by \citet{makarova2021risk}, the unknown heteroscedastic risk corresponding to the reward is considered a penalty. It is added with a manually specified coefficient. Their method comes with a rigorous theoretical guarantee regarding the corresponding risk-regulated reward. However, in both the risk-averse method and the augmented Lagrangian frameworks, the appropriate coefficient of the additive structure is essential to the performance while lacking a prior theoretical justification.

\paragraph{Active learning of constraint(s)}
Active learning for Level-set estimation (AL-LSE) was initially proposed by \citet{gotovos2013active} to perform a classification task on the sample space which enjoys theoretical guarantees. As both AL-LSE and BO share GP features, the method was later extended to the BO domain by \citet{bogunovic2016truncated}, in which they unify the two under truncated variance reduction and assume the kernel such that the variance reduction function is submodular.
The problem with directly applying level-set estimation methods is their limitation on dealing with only one unknown function at one time and lack of a straightforward extension to trade-off the learning of multiple unknown functions, as typically seen in BO with unknown constraints. \cite{malkomes2021beyond, komiyama2022bridging} propose a novel acquisition function that prioritizes diversity in the active search. However, there is no straightforward extension to adaptively trade off the learning of constraints and the optimization of the objective. Incorporating its idea into our setting remains challenging.

\section{Problem Statement}
In this section, we introduce a few useful notations and formalize the problem. Consider a compact search space $\searchSpace\subseteq\mathbb{R}$. We aim to find a maximizer $x^*\in\argmax_{\instance\in\searchSpace}f(\instance)$ of a black-box function $f:\searchSpace\rightarrow\mathbb{R}$, subject to $K$ black-box constraints $\mathcal{C}_k(\instance)$ ($k\in \kSpace =\{1,2,3,..., K\}$) such that each constraint is satisfied by staying above its corresponding threshold $h_k$ \footnote{Note that the minimization problem and the case $\mathcal{C}_k(\instance) < h_k$ are captured in this formalism, as $f(\instance)$ and $\mathcal{C}_k(\instance)$ can both be negated.}.
Thus, formally, our goal can be formulated as finding:
    $$\max_{\instance\in\searchSpace}f(\instance)\text{~s.t.~} {{\mathcal{C}}_k(\instance)> h_k}, \forall k\in \kSpace$$

We maintain a Gaussian process ($\GP$)
as the surrogate model for each black-box function, pick a point $\instance_t\in\searchSpace$ at iteration $t$ by maximizing the acquisition function $\alpha: \searchSpace \rightarrow \reals$, 
and observe the function values perturbed by additive noise: 
$y_{f,t} = f(\instance_t) + \epsilon$ and $y_{\cFunc_k,t} = \cFunc_k(\instance_t) + \epsilon$, with $\epsilon \sim \mathcal{N}(0, \sigma^2)$ 
being i.i.d. Gaussian noise.
Each $\GP_(m(\instance), k(\instance, \instance'))$ is fully specified by its prior mean $m$ and kernel $k$. With the historical observations $\Selected_{t-1} = \{(\instance_i, y_{f, i}, \{y_{c_K, i}\}_{k\in\kSpace})\}_{i=1,2,...t-1}$, the posterior also takes the form of a \GP, with mean 
\begin{equation}\label{eq:posterior_mean}
\mu_{t}(\instance) = k_{t}(\instance)^\top(\GramMat_{t}+\sigma^2I)^{-1}\by_t    
\end{equation}
and covariance 
\begin{equation} \label{eq:posterior_covar}
k_{t}(\instance, \instance') = k_{}(\instance,\instance')-k_{t}(\instance)^\top(\GramMat_{t}+\sigma^2I)^{-1}k_{t}(\instance')    
\end{equation}
where $k_{t}(\instance) \triangleq \bracket{k_{}(\instance_1, \instance),\dots, k_{}(\instance_t, \instance)}^\top$ and $\GramMat_{t} \triangleq \bracket{k_{}(\instance, \instance')}_{\instance,\instance' \in \Selected_{t-1}}$ is a positive definite kernel matrix \citep{rasmussen:williams:2006}.

 The definition of reward plays an important role in analyzing online learning algorithms. Throughout the rest of the paper, we define the reward of CBO as the following and defer the detailed discussion to \appref{sec:reward}.
 \begin{align}\label{eq: reward}
    \reward(\instance) = 
    \begin{cases}
        f(\instance) & \text{if~}~\mathbb{I}(C_k(\instance) > h_k) \textit{\quad}\forall k \in \kSpace\\
        -\inf &\text{o.w.}
    \end{cases}
\end{align}

For simplicity and without loss of generality, 
we stick to the definition in \eqref{eq: reward} and let all $h_k = 0$. We want to locate the global maximizer efficiently 
$$\instance^* = \argmax_{\instance\in\searchSpace, \forall k\in\kSpace, \cFunc_k(\instance) > 0}{f(\instance)}$$ 
Equivalently, we seek to achieve the performance guarantee in terms of simple regret at certain time $t$,
$$\regret_t \coloneqq  \reward(\instance^*) -  \max_{\instance \in \{\instance_1, \instance_2, ... \instance_t \}} \reward(\instance)$$
with a certain probability guarantee. Formally, given a certain confidence level $\delta$ and constant $\epsilon$, we want to guarantee that after using up certain budget $T$ dependent on $\delta$ and $\epsilon$, we could achieve a high probability upper bound of the simple regret on the identified area $\roi$ which is the subset of $\searchSpace$.
$$
P(\max_{\instance\in\roi}\regret_T(\instance) \geq \epsilon) \leq 1-\delta
$$
\section{The \algname Algorithm}
We start with necessary concepts from the active learning for level-set estimation and delve into the framework design.

\subsection{Active learning for level-set estimation}
We follow the common practice and assume each unknown constraint or objective is sampled from a corresponding independent Gaussian process (\GP) \citep{hernandez2015predictive, gelbart2014bayesian, gotovos2013active} to treat the epistemic uncertainty. 
$$\cFunc_k \sim \GP_{\cFunc_k} \textit{\quad} \forall k \in \kSpace$$
$$f \sim \GP_f $$
We could derive pointwise confidence interval estimation with the \GP\xspace for each black-box function. We define the upper confidence bound 
$  \UCBit_{t}(\instance) \triangleq \mu_{t-1}(\instance) + \beta^{1/2}_{t}\sigma_{t-1}(\instance)$ 
and lower confidence bound $ \LCB_{t}(\instance) \triangleq\mu_{t-1}(\instance) - \beta^{1/2}_{t}\sigma_{t-1}(\instance)$, where $\sigma_{t-1}(\instance) = k_{t-1}(\instance,\instance)^{1/2}$ and $\beta$ acts as a scaling factor corresponding to certain confidence. 
For each unknown constraint $\cFunc_k$, we 
follow the notations from \cite{gotovos2013active} and define the superlevel-set to be the areas that meet the constraint $\cFunc_k$ with high confidence
\begin{equation*}
    S_{\cFunc_k, t}\defeq \{\instance\in\searchSpace| \LCB_{\cFunc_k,t}(\instance) > 0 \}    
\end{equation*}

We define the sublevel-set to be the areas that do not meet the constraint $\cFunc_k$ with high confidence
\begin{equation*}
    L_{\cFunc_k, t}\defeq \{\instance\in\searchSpace| \UCBit_{\cFunc_k,t}(\instance) < 0 \}    
\end{equation*}

and the undecided set is defined as
\begin{equation*}
U_{\cFunc_k, t}\defeq \{\instance\in\searchSpace| \UCBit_{\cFunc_k,t}(\instance) \geq 0, \LCB_{\cFunc_k,t}(\instance) \leq 0\}
\end{equation*}
where the points remain to be classified. 

\subsection{Region of interest identification for efficient CBO} \label{sec:roi}
In the CBO setting, we only care about the superlevel-set $S_{\cFunc_k, t}$ and undecided-set $U_{\cFunc_k, t}$, where the global optimum is likely to lie in. Hence, we define the region of interest for each constraint function $\cFunc_k$ as 
\begin{equation*}
    \roi_{\cFunc_k, t} \defeq S_{\cFunc_k, t} \cup U_{\cFunc_k, t} = \{\instance\in\searchSpace| \UCBit_{\cFunc_k,t}(\instance) \geq 0\}    
\end{equation*}

Similarly, for the objective function, though there is no pre-specified threshold, we could use the maximum of $\LCB_{\globalf}(x)$ on the intersection of superlevel-set $S_{\cFunc, t} \defeq \bigcap^{\kSpace}_k S_{\cFunc_k, t}$ 
\begin{equation*}
  \LCB_{\globalf,t, max} \defeq 
    \begin{cases}
        \max_{\instance \in S_{\cFunc, t}} \LCB_{\globalf,t}(\instance), &\text{if } S_{\cFunc, t}\neq \emptyset \\
        -\infty, &\text{o.w.}
    \end{cases}
\end{equation*}
as the high confidence threshold for the  $\UCBit_{\globalf,t}(\instance)$ to identify a region of interest for the optimization of the objective.  Given that $\UCB_{\globalf,t}(\instance^*)\geq f^* \geq f(\instance) \geq \LCB_{\globalf,t}(\instance)$ with the probability specified by the choice of $\beta_{\globalf, t}$ and $\beta_{\cFunc, t}$ , we define the ROI for the objective optimization as 
\begin{equation*}
    \roi_{\globalf, t} \defeq \{\instance\in\searchSpace| \UCBit_{\globalf,t}(\instance) \geq  \LCB_{\globalf,t, max}\}
\end{equation*}

By taking the intersection of the ROI of each constraint, we could identify the ROI for identifying the feasible region
\begin{equation*}
\roi_{\cFunc, t} \defeq \bigcap_{k}^{\kSpace}\roi_{\cFunc_k, t}  
\end{equation*}

The combined ROI for CBO is determined by intersecting the ROIs of constraints and the objective:
\begin{equation} \label{eq:roi}
\roi_{t} \defeq \roi_{\globalf, t}\cap \roi_{\cFunc, t} 
\end{equation}

\begin{figure*}[t]
    \centering
        {
        \includegraphics[trim={1cm 0.5cm 1cm 0.5cm}, width=.8\textwidth]{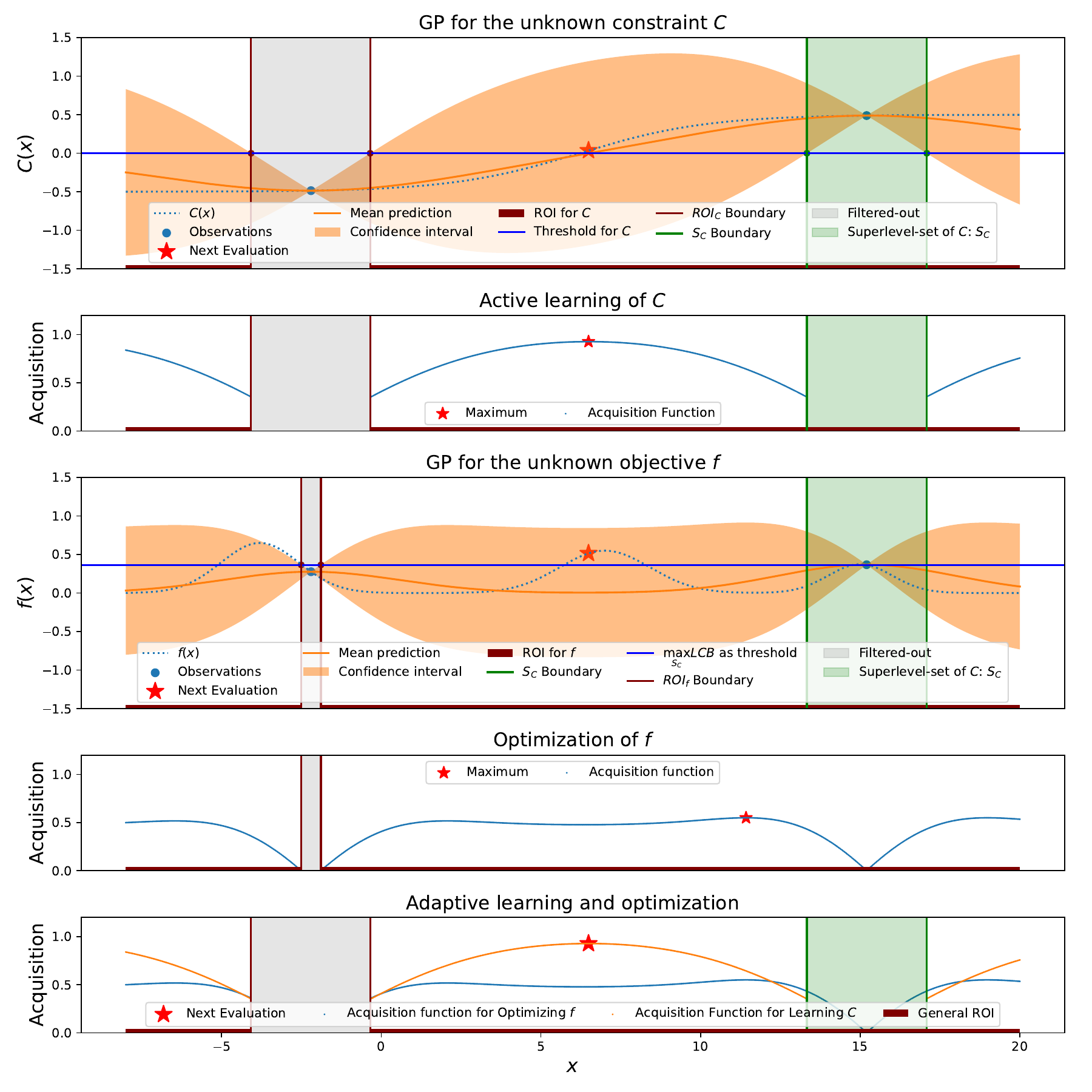}
    }
    \vspace{-.2cm}
    \caption{Illustration of \algname on a synthetic noise-free 1D example. The first two rows show the GP for the $\cFunc$, the superlevel-set $S_\cFunc$, the region of interest $\roi_{\cFunc}$ and the corresponding acquisition function $\acqC(\instance)$ as defined in \eqref{eq:acqC}. The following two rows show the GP for $\globalf$, the region of interest $\roi_{\globalf}$, and the corresponding acquisition function $\acqF(\instance)$ defined in \eqref{eq:acqF}. We show that after identifying $S_\cFunc$, we could define the threshold for ROI identification of $\globalf$ accordingly.
    The bottom row demonstrates that the general ROI  $\roi$ as defined in \eqref{eq:roi} is identified by taking the intersection ROI for $\globalf$ and $\cFunc$. The general acquisition function is defined as the maximum of the acquisition for $\globalf$ and $\cFunc$ and is maximized on the $\roi$. The scaling and length scale of the Gaussian processes are learned by maximizing the likelihood. 
    }   
    \label{fig:1D_illustration}
    \vspace{-.4cm}
\end{figure*}

\subsection{Combining acquisition functions for CBO}
\paragraph{Acquisition function for optimizing the objective} To optimize the unknown objective $\globalf$ when $\roi_{\globalf,t}$ is established, we can employ the following acquisition function \footnote{Such criterion has been studied under the unconstrained setting \citep{zhang2023learning}.}
\begin{equation}\label{eq:acqF}
    \acqF(\instance) \defeq 
    \begin{cases}
        \UCBit_{\globalf,t}(\instance) - \LCB_{\globalf,t, max} &\text{ }\LCB_{\globalf,t, max} \ne \infty\\
        \UCBit_{\globalf,t}(\instance) - \LCB_{\globalf,t}(\instance) &\text{ }\LCB_{\globalf,t, max} = \infty
    \end{cases}
\end{equation}
At given $t$, to efficiently optimize the black-box $f$ we evaluate the point $\instance_t = \argmax_{\instance \in \roi_{\globalf,t}} \acqF{(\instance)}$. Since at a given $t$, when $\LCB_{\globalf,t, max}(\instance)$ is constant, the acquisition function is equivalent to $\UCBit_{\globalf,t}(\instance)$. 

\paragraph{Acquisition function for learning the constraints} When we merely focus on identifying the feasible region defined by a certain unknown constraint $\cFunc_{k}$, we could apply the following acquisition function that serves the purpose of active learning of the unknown constraint function.
\begin{equation}\label{eq:acqC}
    \acqC(\instance) \defeq \UCBit_{\cFunc_k,t}(\instance) - \LCB_{\cFunc_k,t}(\instance)
\end{equation}
At given $t$, we evaluate the point $\instance_t = \argmax_{\instance \in U_{\cFunc_k, t}} \acqF{(\instance)}$ to efficiently identify the feasible region defined by $\cFunc_k$. Note that the acquisition function $\acqC(\instance)$ is not maximized on the full $\roi_{\cFunc_k, t}$, but only on $U_{\cFunc_k, t}$  as is shown in \figref{fig:1D_illustration}. The active learning on the superlevel-set $S_{\cFunc_k, t}$ doesn't contribute to identifying the corresponding feasible region.

\paragraph{The \algname acquisition criterion} With the two acquisitions discussed above and the ROIs discussed in \secref{sec:roi}, we propose the algorithm \textbf{\underline{CO}}nstrained \textbf{\underline{B}}O with \textbf{\underline{A}}daptive active \textbf{\underline{L}}earning of unknown constrain\textbf{\underline{t}}s (\algname). \footnote{We briefly discuss the possible extension to decoupled setting, where the objective and constraints may be evaluated independently, of \algname in \appref{sec:decoupled}.} \algname essentially picks a data point with the maximum acquisition function value across all the acquisition functions defined on different domains. 
The maximization of different acquisition functions allows an adaptive tradeoff between the active learning of the constraints and the Bayesian Optimization of the objective on the feasible region. The intersection of ROIs allows an efficient search space shrinking for CBO. The complete procedure is shown in \algref{alg:main}. We also illustrate the procedure on a 1D toy example in \figref{fig:1D_illustration}. We construct the example to demonstrate that the explicit, active learning of the constraint doesn't necessarily hurt the optimization but could contribute directly to the simple regret improvement. 

\begin{algorithm*}[Ht]
\caption{\textbf{\underline{CO}}nstrained \textbf{\underline{B}}O with \textbf{\underline{A}}daptive active \textbf{\underline{L}}earning of unknown constrain\textbf{\underline{t}}s (\algname)}
\label{alg:main}
    \begin{algorithmic}[1]
        \STATE {\bf Input}:Search space $\searchSpace$, initial observation $\Selected_0$, horizon $T$;
        \FOR{$t = 1\ to\ T$}
            \STATE Update the posteriors of $\GP_{\globalf,t}$ and $\GP_{\cFunc_k, t}$ according to \eqref{eq:posterior_mean} and \plaineqref{eq:posterior_covar}
             
            \STATE Identify ROIs $\roi_t$, and undecided sets $U_{\cFunc_k, t}$ 

            \FOR{$k \in \kSpace$}
                \IF{$U_{\cFunc_k, t} \neq \emptyset$}
                \STATE Candidate for active learning of each constraint: \\
                $\instance_{\cFunc_k, t} \leftarrow \argmax_{\instance \in U_{\cFunc_k, t}} \acqC{(\instance)}$ as in \eqref{eq:acqC}
                \STATE  $\gG \leftarrow \gG \cup \cFunc_{k, t}$
                \ENDIF
            \ENDFOR
            
            \STATE Candidate for optimizing the objective: 
            $\instance_{\globalf, t} \leftarrow \argmax_{\instance \in \roi_{\globalf,t}} \acqF{(\instance)}$ as in \eqref{eq:acqF}
            \STATE $\gG \leftarrow \gG \cup \globalf$

            \STATE Maximize the acquisition values from different aspects: \\
            $g_t \leftarrow \argmax_{g \in \gG} \alpha_{g, t}{(\instance_{g, t})} $

            \STATE Pick the candidate to evaluate: $\instance_t \leftarrow \instance_{g, t}$  

            \STATE Update the observation set with the candidate and corresponding new observations\\
            $ \Selected_t \leftarrow \Selected_{t-1} \cup \{(\instance_t, y_{\globalf, t}, \{y_{c_k, t}\}_{k\in\kSpace})\}$
            
        \ENDFOR
    \end{algorithmic}
\end{algorithm*}

\vspace{-2mm}
\section{Theoretical Analysis}\label{sec: analysis}
\vspace{-2mm}

We first state a few assumptions that provide insights into the convergence properties of \algname. 
\begin{assumption} \label{apt: sample_gp}
The objective and constraints are sampled from independent Gaussian processes. Formally, for all $t < T$ and $\instance \in \searchSpace$, $f(\instance)$ is a sample from $\mathcal{GP}_{\globalf, t}$, and $\cFunc_k (\instance)$ is a sample from $ \GP_{\cFunc_k, t} $, for all $k \in \kSpace$.
\end{assumption}

\begin{assumption} \label{apt: exist_star}
A global optimum exists within the feasible region. The distance between this global optimum and the boundaries of the feasible regions is uniformly bounded below by $\epsilon_{\cFunc}$. More specifically, for all $k \in \kSpace$, it holds that $\cFunc_k(\instance^*) > \epsilon_{\cFunc}$.
\end{assumption}

\begin{assumption} \label{apt: mono_ci}
Given a proper choice of $\beta_t$ that is non-increasing, the confidence interval shrinks monotonically. For all $t_1 < t_2 < T$ and $\instance \in \searchSpace$, if $\beta_{t_1} \leq \beta_{t_2}$, then $\UCBit_{t_1}(\instance) \geq \UCBit_{t_2}(\instance)$ and $\LCB_{t_1}(\instance) \leq \LCB_{t_2}(\instance)$.
\end{assumption}

This is a mild assumption as long as $\beta_t$ is non-increasing, given recent work by \citet{koepernik2021consistency} showing that if the kernel is continuous and the sequence of sampling points lies sufficiently dense, the variance of the posterior \GP converges to zero almost surely monotonically if the function is in metric space.
If the assumption is violated, the technique of taking the intersection of all historical confidence intervals introduced by \citet{gotovos2013active} could similarly guarantee a monotonically shrinking confidence interval. That is, when $\exists t_1 < t_2 < T, \instance\in\searchSpace$, if we have $\UCBit_{t_1}(\instance) < \UCBit_{t_2}(\instance)$ or $\LCB_{t_1}(\instance) > \LCB_{t_2}(\instance)$, we let $\UCBit_{t_2}(\instance) = \UCBit_{t_1}(\instance)$ or $\LCB_{t_2}(\instance) = \LCB_{t_1}(\instance)$ to guarantee the monotonocity. 

The following lemma justifies the definition of the regions(s) of interest $\roi_{t}$ defined in \eqref{eq:roi}. 
\begin{lemma}\label{lem: roi}
Under the assumptions above, the regions of interest $\roi_{t}$, as defined in \eqref{eq:roi}, contain the global optimum with high probability. Formally, for all $\delta \in (0,1)$, $t\geq 1$, and any finite discretization $\discreteSet$ of $\searchSpace$ that contains the optimum $\instance^* = \argmax_{\instance\in \searchSpace}f(\instance)$ where $\cFunc_k(\instance^*) > \epsilon_{\cFunc}$ for all $k \in \kSpace$ and $\beta_t=2\log(2(K+1)\vert \discreteSet \vert \pi_t/ \delta)$ with $\sum_{t\geq 1}\pi_t^{-1} = 1$,  we have $\Pr{\instance^* \in \roi_{t}} \geq 1-\delta$.
\end{lemma}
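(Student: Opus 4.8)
The plan is to isolate all the randomness into a single high-probability \emph{confidence event} and then argue deterministically on that event that $\instance^* \in \roi_t$. First I would define
$$E \defeq \left\{\, |g(\instance) - \mu_{g,t-1}(\instance)| \le \beta_t^{1/2}\sigma_{g,t-1}(\instance)\ \text{for all } g\in\{\globalf,\cFunc_1,\dots,\cFunc_K\},\ \instance\in\discreteSet,\ t\ge 1 \,\right\}.$$
Under Assumption~\ref{apt: sample_gp}, conditioned on the history each $g(\instance)$ is Gaussian with mean $\mu_{g,t-1}(\instance)$ and variance $\sigma^2_{g,t-1}(\instance)$, so the Gaussian tail bound $\Pr{|Z|>c}\le e^{-c^2/2}$ gives a per-(function, point, round) failure probability of $e^{-\beta_t/2}$. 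With the prescribed $\beta_t=2\log(2(K+1)|\discreteSet|\pi_t/\delta)$ this equals $\delta/(2(K+1)|\discreteSet|\pi_t)$; a union bound over the $K+1$ surrogates and the $|\discreteSet|$ points, followed by summing over $t$ using $\sum_{t\ge1}\pi_t^{-1}=1$, yields $\Pr{E}\ge 1-\delta/2\ge 1-\delta$. This is the bookkeeping step I would set up most carefully, since the factor $2(K+1)$ inside $\beta_t$ is exactly what pays for the union bound across all surrogates.

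Next, working on $E$, I would handle the constraint inclusion, which is the easy part. Since $\instance^*\in\discreteSet$ by hypothesis, the confidence bound gives $\UCBit_{\cFunc_k,t}(\instance^*)=\mu_{\cFunc_k,t-1}(\instance^*)+\beta_t^{1/2}\sigma_{\cFunc_k,t-1}(\instance^*)\ge \cFunc_k(\instance^*)$. Assumption~\ref{apt: exist_star} guarantees $\cFunc_k(\instance^*)>\epsilon_{\cFunc}>0$, so $\UCBit_{\cFunc_k,t}(\instance^*)>0$ and hence $\instance^*\in\roi_{\cFunc_k,t}$ for every $k$; intersecting over $k$ gives $\instance^*\in\roi_{\cFunc,t}$.

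The heart of the argument is the objective inclusion, i.e.\ showing $\UCBit_{\globalf,t}(\instance^*)\ge \LCB_{\globalf,t,max}$. If $S_{\cFunc,t}=\emptyset$ then $\LCB_{\globalf,t,max}=-\infty$ and the inclusion is immediate. Otherwise let $\tilde\instance\in S_{\cFunc,t}$ attain the maximum defining $\LCB_{\globalf,t,max}$. The key observation is that $\tilde\instance$ is \emph{genuinely feasible}: membership in $S_{\cFunc,t}$ means $\LCB_{\cFunc_k,t}(\tilde\instance)>0$ for all $k$, and on $E$ this forces $\cFunc_k(\tilde\instance)\ge \LCB_{\cFunc_k,t}(\tilde\instance)>0$. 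Since $\instance^*$ is the constrained maximizer, feasibility of $\tilde\instance$ gives $\globalf(\tilde\instance)\le \globalf(\instance^*)$, and chaining the confidence bounds for $\globalf$ at both points yields $\LCB_{\globalf,t,max}=\LCB_{\globalf,t}(\tilde\instance)\le \globalf(\tilde\instance)\le \globalf(\instance^*)\le \UCBit_{\globalf,t}(\instance^*)$, so $\instance^*\in\roi_{\globalf,t}$. Combining the two inclusions, $\instance^*\in\roi_{\globalf,t}\cap\roi_{\cFunc,t}=\roi_t$ on $E$, whence $\Pr{\instance^*\in\roi_t}\ge\Pr{E}\ge 1-\delta$.

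I expect the objective step to be the main obstacle, as it is the only place where the constraint surrogates and the objective surrogate must be used jointly: the whole argument hinges on the threshold $\LCB_{\globalf,t,max}$ being a maximum over the \emph{conservative} feasible set $S_{\cFunc,t}$ rather than over all of $\searchSpace$, which is what certifies that the maximizing point $\tilde\instance$ is truly feasible and therefore dominated by $\globalf(\instance^*)$. A secondary care point is that $S_{\cFunc,t}$, the ROIs, and in particular $\tilde\instance$ must be understood as subsets of the discretization $\discreteSet$ (with $\instance^*\in\discreteSet$), so that the event $E$ actually applies to $\tilde\instance$ and the finite union bound is legitimate. Finally, I would note that monotonicity of the confidence intervals (Assumption~\ref{apt: mono_ci}) is not needed for containment: the $\pi_t$-weighted $\beta_t$ already delivers the bound uniformly over $t$, and that assumption would only enter later shrinkage and convergence arguments.
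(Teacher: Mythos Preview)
Your proposal is correct and follows essentially the same route as the paper: establish a uniform confidence event via the Srinivas-style union bound (the factor $2(K+1)$ in $\beta_t$ paying for the $K+1$ surrogates), then verify $\instance^*\in\roi_{\cFunc,t}$ and $\instance^*\in\roi_{\globalf,t}$ separately, the latter by showing any maximizer $\tilde\instance\in S_{\cFunc,t}$ of $\LCB_{\globalf,t}$ is genuinely feasible and hence dominated by $\globalf(\instance^*)$.

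The one difference worth noting is in the bookkeeping for the objective inclusion. The paper treats the feasibility of $\tilde\instance$ and the $\globalf$-confidence bounds as two separate $1-\delta/2$ events and then invokes the assumed independence between $\globalf$ and $\{\cFunc_k\}$ to multiply, obtaining $(1-\delta/2)^2\ge 1-\delta$. Your single good event $E$ already covers all $K+1$ surrogates simultaneously, so everything after the union bound is deterministic and the independence assumption is never needed at this step; this is the cleaner and more standard packaging, and it also makes transparent the point you flag about $\tilde\instance$ needing to lie in $\discreteSet$.
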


The lemma shows that with proper choice of prior and $\beta$,
the $\roi_{\globalf, t}$ remains nonempty during optimization.

Subsequently, let's define the maximum information gain about function $f$ after $T$ rounds:

\begin{equation}\label{eq:gammaT}
\maxInfo_{f, T} = \max_{\actionSet\subset \discreteSet: \vert \actionSet \vert=T}{\mutualinfo{y_\actionSet; f_\actionSet}}
\text{\qquad and \qquad}
    \widehat{\maxInfo_T} = \sum_{g \in \{\globalf\}\cup \{\cFunc_k\}_{k\in \kSpace}}{\maxInfo_{g, T}}    
\end{equation}

For clarity, we denote $\discreteROI = \discreteSet \cap \roi_t$, and $CI_{\globalf^*, t } = [\max_{\instance \in \discreteROI}\LCB_{t}(\instance),
\max_{\instance \in \discreteROI}\UCBit_{t}(\instance)]$. In the following, we show that we could bound the simple regret of $\algname$ after sufficient rounds. Concretely, in \thmref{thm: width} we provide an upper bound on the width of the confidence interval for the global optimum $f^*=f(\instance^*)$. 

\begin{theorem}\label{thm: width}
 Under the aforementioned assumptions, with a constant $\beta_t=2\log(2(K+1)\vert \discreteROI \vert \pi_t/ \delta)$ and the acquisition function from $\algoref{alg:main}$, there exists an $\epsilon \leq \min_{k\in\kSpace}\epsilon_k$, such that after at most $T \geq \frac{\beta_T \widehat{\maxInfo_T} C_1}{\epsilon^2}$ iterations, we have $\Pr{\vert CI_{\globalf^*, t}\vert \leq \epsilon, \globalf^* \in CI_{\globalf^*, t } \text{ }\vert \text{ }t\geq T} \geq 1 - \delta$
    Here, $C_1 = 8/\log(1+\sigma^{-2})$.
\end{theorem}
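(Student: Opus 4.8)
The plan is to run the standard posterior-variance / information-gain contraction argument, but adapted to the adaptive choice between the objective acquisition and the $K$ constraint acquisitions. Throughout I work on the event $\mathcal{E}$ on which every pointwise interval is valid, i.e. $\LCB_{g,t}(\instance)\le g(\instance)\le\UCBit_{g,t}(\instance)$ for all $g\in\{\globalf\}\cup\{\cFunc_k\}_{k\in\kSpace}$, all $\instance\in\discreteSet$ and all $t$. With $\beta_t=2\log(2(K+1)|\discreteROI|\pi_t/\delta)$ and $\sum_t\pi_t^{-1}=1$, a union bound over the $K+1$ functions and the discretization gives $\Pr{\mathcal{E}}\ge 1-\delta$; this is exactly the event behind \lemref{lem: roi}, so on $\mathcal{E}$ we have $\instance^*\in\roi_t$ and hence $\globalf^*\in CI_{\globalf^*,t}$ for every $t$, settling the containment half. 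For the width, write $\instance_t^+=\argmax_{\instance\in\discreteROI}\UCBit_{\globalf,t}(\instance)$; applying the elementary inequality $\max_\instance A-\max_\instance B\le A(\instance_t^+)-B(\instance_t^+)$ with $A=\UCBit_{\globalf,t}$ and $B=\LCB_{\globalf,t}$ yields
\[
|CI_{\globalf^*,t}|\;\le\;\UCBit_{\globalf,t}(\instance_t^+)-\LCB_{\globalf,t}(\instance_t^+)\;=\;2\beta_t^{1/2}\sigma_{\globalf,t-1}(\instance_t^+),
\]
so $|CI_{\globalf^*,t}|>\epsilon$ forces $\sigma_{\globalf,t-1}(\instance_t^+)>\epsilon/(2\beta_t^{1/2})$.

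The heart of the argument is a per-round lower bound on the posterior variance at the \emph{queried} point, valid whichever aspect is selected. Assume for contradiction that $|CI_{\globalf^*,t}|>\epsilon$ for all $t\le T$. If the objective is selected then $\instance_t=\instance_t^+$ (reading $\acqF$ as maximized over the combined ROI $\roi_t$), and the display gives $\sigma_{\globalf,t-1}(\instance_t)>\epsilon/(2\beta_t^{1/2})$ directly. If a constraint $\cFunc_k$ is selected, I first observe that the maximizer of $\LCB_{\globalf,t}$ over $S_{\cFunc,t}$ lies in $\roi_t$ (it has $\UCBit_{\globalf,t}\ge\LCB_{\globalf,t,max}$ and sits in $\roi_{\cFunc,t}$), so $\max_{\discreteROI}\LCB_{\globalf,t}\ge\LCB_{\globalf,t,max}$ and hence $\acqF(\instance_t^+)=\UCBit_{\globalf,t}(\instance_t^+)-\LCB_{\globalf,t,max}\ge|CI_{\globalf^*,t}|>\epsilon$. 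Since the algorithm picks the globally largest acquisition, $2\beta_t^{1/2}\sigma_{\cFunc_k,t-1}(\instance_t)=\acqC(\instance_t)\ge\acqF(\instance_t^+)>\epsilon$, so $\sigma_{\cFunc_k,t-1}(\instance_t)>\epsilon/(2\beta_t^{1/2})$. In both cases the queried function's standard deviation at $\instance_t$ exceeds $\epsilon/(2\beta_t^{1/2})\ge\epsilon/(2\beta_T^{1/2})$.

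Finally I sum and invoke the information-gain bound. Grouping rounds by the selected function $g_t$ and using the coupled observation model (every function is observed at every $\instance_t$), the standard inequality $\sum_t\sigma_{g,t-1}^2(\instance_t)\le\tfrac{2}{\log(1+\sigma^{-2})}\maxInfo_{g,T}$ holds for each $g$; restricting each to its queried rounds and summing over $g\in\{\globalf\}\cup\{\cFunc_k\}_{k\in\kSpace}$,
\[
\frac{T\epsilon^2}{4\beta_T}\;<\;\sum_{t=1}^{T}\sigma_{g_t,t-1}^2(\instance_t)\;\le\;\frac{2}{\log(1+\sigma^{-2})}\,\widehat{\maxInfo_T}.
\]
Rearranging gives $T<C_1\beta_T\widehat{\maxInfo_T}/\epsilon^2$ with $C_1=8/\log(1+\sigma^{-2})$, contradicting $T\ge C_1\beta_T\widehat{\maxInfo_T}/\epsilon^2$. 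Hence some $t_0\le T$ has $|CI_{\globalf^*,t_0}|\le\epsilon$, and \assref{apt: mono_ci} (monotone shrinking of the intervals, hence of the nested ROIs and of the width) propagates this to all $t\ge T$.

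The main obstacle is the middle step: the width of $CI_{\globalf^*,t}$ is anchored at the objective's UCB-maximizer $\instance_t^+$, yet the adaptive rule may query a constraint far from $\instance_t^+$, on which rounds the objective uncertainty at $\instance_t^+$ need not shrink at all. The chain $\acqC(\instance_t)\ge\acqF(\instance_t^+)\ge|CI_{\globalf^*,t}|$ is what rescues the argument: it converts ``the width is still large'' into ``the queried constraint variance is large'', so every round—objective or constraint—spends at least $\epsilon^2/(4\beta_T)$ of a finite, additively decomposed information-gain budget $\widehat{\maxInfo_T}$. Two points need care when the details are filled in: aligning the optimization domain of $\acqF$ with the intersection ROI $\roi_t$ so that $\instance_t=\instance_t^+$ on objective rounds, and using \assref{apt: exist_star} with $\epsilon\le\min_k\epsilon_k$ so that the margin at $\instance^*$ keeps it inside $\roi_t$ via \lemref{lem: roi} and forces the undecided sets to be resolved, which is what makes both the width reduction and the monotonicity propagation legitimate.
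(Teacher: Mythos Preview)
Your argument shares the paper's two pillars --- the chain $|CI_{\globalf^*,t}|\le\acqF(\instance_{\globalf,t})\le\alpha_t$ and the information-gain control of posterior variances --- but the technical route differs. The paper does not argue by contradiction; instead it proves an intermediate lemma that the maximum acquisition $\alpha_t=\max_g\alpha_{g,t}(\instance_{g,t})$ satisfies $\alpha_T\le\epsilon$, by bounding $\sum_t\alpha_t^2\le\sum_t\sum_{g}(2\beta_t^{1/2}\sigma_{g,t-1}(\instance_{g,t}))^2\le C_1\beta_T\widehat{\maxInfo_T}$ (summing over \emph{all} $g$ at every round, not only the selected one) and then combining Cauchy--Schwarz with monotonicity of $\alpha_t$ to get $\alpha_T\le\sqrt{C_1\beta_T\widehat{\maxInfo_T}/T}$. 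The paper also uses $\epsilon\le\min_k\epsilon_k$ explicitly, and earlier than you do: once $\alpha_T\le\epsilon$, the constraint widths on every $U_{\cFunc_k,T}$ are at most $\epsilon\le\epsilon_k$, which forces $\instance^*\in S_{\cFunc,T}$ and hence $\LCB_{\globalf,T,max}\ne-\infty$; only then is the width bound $|CI_{\globalf^*,T}|\le\acqF(\instance_{\globalf,T})\le\alpha_T\le\epsilon$ invoked. Your contradiction route, and your choice to sum only $\sigma_{g_t,t-1}^2(\instance_t)$, are arguably tidier in the coupled setting (you avoid the paper's summation over non-queried $\instance_{g,t}$).

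There is, however, one genuine gap in your propagation step. You conclude that some $t_0\le T$ has $|CI_{\globalf^*,t_0}|\le\epsilon$ and then assert that \assref{apt: mono_ci} carries this to all $t\ge T$. But $|CI_{\globalf^*,t}|=\max_{\discreteROI}\UCBit_{\globalf,t}-\max_{\discreteROI}\LCB_{\globalf,t}$ is a difference of a decreasing quantity and a quantity that is a max of an \emph{increasing} function over a \emph{shrinking} set; the second term need not be monotone, so $|CI_{\globalf^*,t}|$ is not obviously non-increasing. The paper avoids this by working with $\alpha_t$ instead: each $\alpha_{g,t}(\instance_{g,t})$ is a max of a shrinking confidence width over a shrinking domain, hence monotone, and then $|CI_{\globalf^*,t}|\le\alpha_t\le\alpha_T\le\epsilon$ for all $t\ge T$. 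The simplest patch to your argument is to run the contradiction on ``$\alpha_t>\epsilon$ for all $t\le T$'' rather than on ``$|CI_{\globalf^*,t}|>\epsilon$'', which your chain $\alpha_t\ge\acqF(\instance_{\globalf,t})\ge|CI_{\globalf^*,t}|$ already supports; the monotone propagation then goes through $\alpha_t$ exactly as in the paper. Your acknowledged domain-alignment caveat (that $\acqF$ is maximized over $\roi_{\globalf,t}$, not $\roi_t$) is also present in the paper's derivation and is handled the same way there, via $\roi_t\subseteq\roi_{\globalf,t}$.
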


\section{Experiments}\label{sec: exps}
\begin{figure*}[t]

  \centering
      {
    \includegraphics[trim={8cm 1cm 8cm 2cm}, width=1\textwidth]{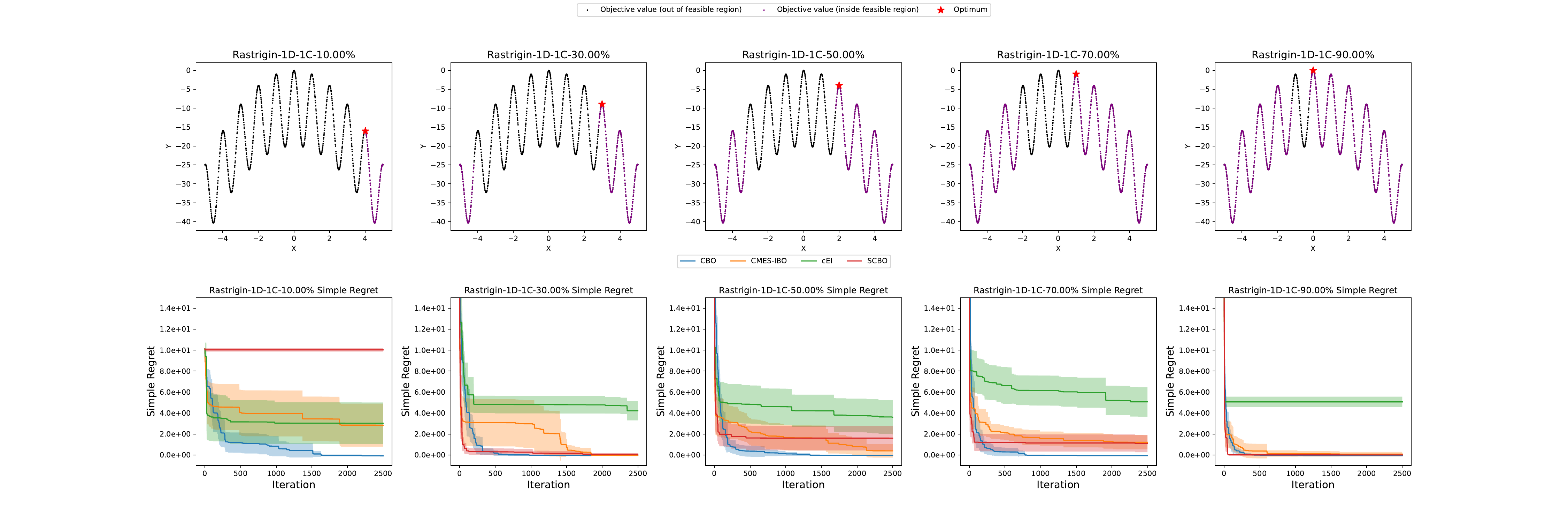}
  }
  \vspace{-.4cm}
\caption{Each column corresponds to a certain threshold choice for the single constraint $c(\instance) = |\instance+0.7|^{1/2}$ in the Rastrigin-1D-1C task. The search space contains a certain portion of the feasible region, denoted on each figure and title. The first row shows the distribution of 1000 samples from the noise-free distribution objective function, and the figures are differentiated with different feasible regions. The second row shows corresponding simple regret curves. We test each method with 15 independent trails and impose observation noises sampled from $\normal{(0, 0.1)}$ not shown in the first row.
}   
\label{fig:exps:scan_res}
\end{figure*}

In this section, we empirically study the performance of \algname against three baselines, including (1) cEI, the extension of EI into CBO from \citet{gelbart2014bayesian}, (2) cMES-IBO, a state-of-the-art information-based approach by \citet{takeno2022sequential}, and (3) SCBO, a recent Thompson Sampling (TS) method tailored for scalable CBO from \citet{eriksson2021scalable}. We abstain from comparison against Augmented-Lagrangian methods, following the practice of \citet{takeno2022sequential}, as past studies have illustrated its inferior performance against sampling methods \citep{eriksson2021scalable}
or information-based methods \citep{takeno2022sequential, hernandez2014predictive}. We begin by describing the 
optimization tasks, and then discuss the performances. 

\subsection{CBO Tasks}
We compare \algname against the aforementioned baselines across six CBO tasks. The first two synthetic CBO tasks are constructed from conventional BO benchmark tasks\footnote{Here, we rely on the implementation contained in BoTorch's \citep{balandat2020botorch} test function module.}. Among the other four real-world CBO tasks, the first three are extracted from \cite{tanabe2020easy}, offering a broad selection of multi-objective multi-constraints optimization tasks. The fourth one is a 32-dimensional optimization task extracted from the UCI Machine Learning repository \citep{misc_wave_energy_converters_534}. Further details about the datasets are available in \appref{sec:dataset}.

\begin{itemize}[leftmargin=*]
  \item The \emph{Rastrigin function} is a non-convex function used as a performance test problem for optimization algorithms. It was first proposed by \citet{10018403158} and used as a popular benchmark dataset \citep{pohlheimgeatbx}. The feasible region takes up approximately 60\% of the search space. We also vary the threshold to control the portion of the feasible region to study the robustness of \algname. \Figref{fig:exps:scan_res} shows the distribution of the objective function and feasible regions.
  \item The \emph{Ackley function} is another commonly used optimization benchmark. We construct two constraints to enforce a feasible area approximately taking up 14\% of the search space.
  \item The \emph{pressure vessel design problem} aims at optimizing the total cost of a cylindrical pressure vessel. The feasible regions take up approximately 78\% of the whole search space.
  \item The \emph{coil compression spring design problem} aims to optimize the volume of spring steel wire, which is used to manufacture the spring \citep{lampinen1999mixed} under static loading. The feasible regions take up approximately 0.38\% of the whole search space.
  \item The \emph{car cab design problem} includes seven input variables and eight constraints. The feasible feasible region takes up approximately 13\% of the whole search space.
  \item This \emph{UCI water converter problem} consists of positions and absorbed power outputs of wave energy converters (WECs) from the southern coast of Sydney\citep{misc_wave_energy_converters_534}. The feasible feasible region takes up approximately 27\% of the whole search space.
\end{itemize}

\begin{figure*}[t]
  \centering
      {
    \includegraphics[trim={4cm 1cm 4cm 1cm}, width=.9\textwidth]{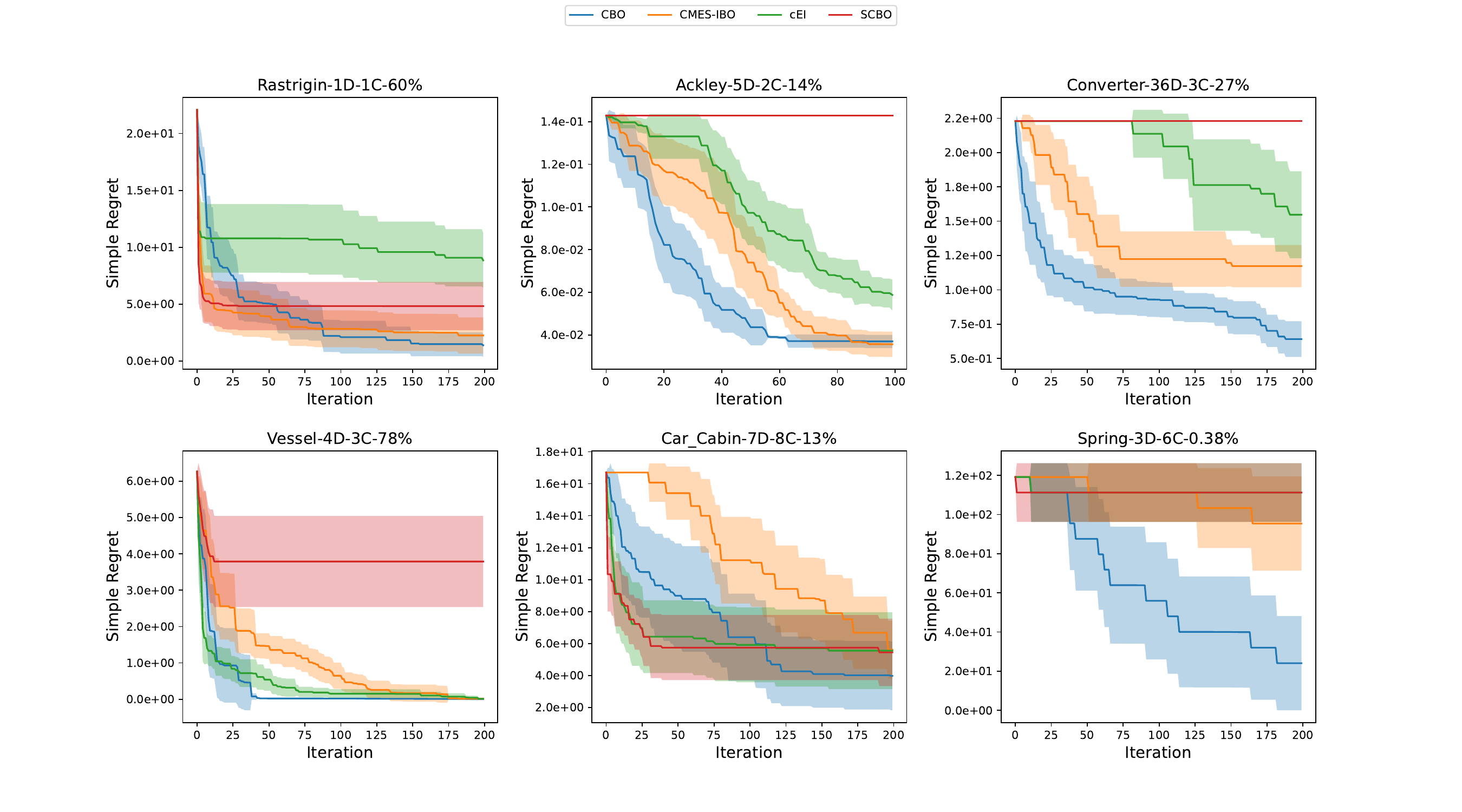}
  }
  \vspace{-.3cm}
\caption{The input dimensionality, the number of constraints, and the approximate portion of the feasible region in the whole search space for each task are denoted on the titles. We run the algorithms on each task for at least 15 independent trials. The curves show the average simple regret after standardization, while the shaded area denotes the 95\% confidence interval through the optimization.
}   
\vspace{-.4cm}
\label{fig:exps:all_res}

\end{figure*}

\subsection{Results}
We study the robustness of the algorithms with varying feasible region sizes on the Rastrigin-1D-1C task. Results are demonstrated in \figref{fig:exps:scan_res}.
Note that the discrete search space consists of the 1000 points shown in the first row of \figref{fig:exps:scan_res}, and with the observation noises, only \algname consistently reaches the global optimum within 2000 iterations. The convergence highlights the essential role of the active learning of the constraint in achieving robust optimization when unknown constraints are present. 

We further study \algname on the aforementioned optimization tasks and show the simple regret curves in \figref{fig:exps:all_res}. 
On Rastrigin-1D-1C and Car-Cabin-7D-8C, \algname lags behind the baselines at the early stage of the optimization, potentially due to the active learning of the constraints outweighing the optimization. The steady improvement of \algname through the optimization allows a consistently superior performance after sufficient iterations. In contrast, the baselines are trapped at the local optimum on these two tasks. The results show that \algname is efficient and effective in various settings, including different dimensionalities of input space, different numbers of constraints, and different correlations between constraints. 

\section{Conclusion}
Bayesian optimization with unknown constraints poses challenges in the adaptive tradeoff between optimizing the unknown objective and learning the constraints.
We introduce \algname, which is backed by rigorous theoretical guarantees, to efficiently address constrained Bayesian optimization.
Our key insights include: (1) the ROIs determined through adaptive level-set estimation can congregate and contribute to the overall Bayesian optimization task; 
(2) acquisition functions based on independent GPs can be unified in a principled way. 
Through extensive experiments, we validate the efficacy and robustness of our proposed method across various optimization tasks.

\bibliography{reference}
\clearpage
\appendix
\section{Proofs}\label{sec: proof}
\subsection{Proof of \lemref{lem: roi}}
\begin{proof}
    Similar to lemma 5.1 of \cite{srinivas2009gaussian}, with probability at least $1-1/2\delta$, $\forall \instance\in \discreteSet, \forall t\geq 1, \forall g \in \{\globalf\} \cup \{\cFunc_k\}_{k\in \kSpace}$,
    $$\vert g(\instance) - \mu_{g, t-1}(\instance)\vert \leq \beta_{t}^{1/2}\sigma_{g, t-1}(\instance)$$
    
    Note that we also take the union bound on $ g \in \{\globalf\} \cup \{\cFunc_k\}_{k\in \kSpace}$.

    First, by definition $S_{\cFunc, t} \defeq \bigcap^{\kSpace}_k S_{\cFunc_k, t}$, we have $\forall t \leq T, \instance \in S_{\cFunc, t}, \forall k \in \kSpace$
    $$
    \Pr{\cFunc_k(\instance) \geq \LCB_{\cFunc_k, t}(\instance) = \mu_{\cFunc_k, t-1}(\instance) - \beta_{t}^{1/2}\sigma_{\cFunc_k, t-1}(\instance) > 0} \geq 1-1/2\delta
    $$
    meaning with probability at $1-\delta$, $\instance$ lies in the feasible region. At the same time, we have, $\forall t \leq T$
    $$
    \Pr{\UCB_{\globalf, t}(\instance^*) \geq \globalf(\instance^*) \geq \globalf(\instance) \geq \LCB_{\globalf, t}(\instance) \text{ }\vert\text{ } \cFunc_k(\instance) > 0, \forall k \in \kSpace} \geq 1-1/2\delta
    $$
    Given the mutual independency between the objective $\globalf$ and the constraints $\cFunc_k$, and by the definition of the threshold $\LCB_{\globalf,t, max}(\instance)$, we have
    $\forall t \leq T$, when $\exists \instance \in S_{\cFunc, t}$,
    $$
    \Pr{\UCB_{\globalf, t}(\instance^*) > \LCB_{\globalf, t, max} } \geq (1-1/2\delta)^2 \geq 1-\delta
    $$

    Note when $S_{\cFunc, t} = \emptyset$, $\LCB_{\globalf,t, max}(\instance)=-\infty$, we have $\Pr{\UCB_{\globalf, t}(\instance^*) > \LCB_{\globalf, t, max}(\instance) } = 1$.

    In summary, we've shown that with probability at least $1-\delta$, $\instance^* \in \roi_{\globalf, t}$.

    Next, by the definition of $\instance^* = \argmax_{\instance\in \searchSpace}f(\instance)$ $s.t.$ $\cFunc_k(\instance^*) > \epsilon_{\cFunc}$ we have $\forall t \leq T, \forall k \in \kSpace$
    $$
    \Pr{\UCBit_{\cFunc_k, t}(\instance^*) = \mu_{\cFunc_k, t-1}(\instance^*) + \beta_{t}^{1/2}\sigma_{\cFunc_k, t-1}(\instance^*) \geq \cFunc_k(\instance^*) > 0} \geq 1-1/2\delta
    $$
    meaning with probability at least $1-1/2\delta$, $\instance^* \in \roi_{\cFunc_k, t}$. And in general, we have $\forall t \leq T, \forall k \in \kSpace$
    $$
    \Pr{\instance^* \in \roi_{t}} \geq 1-\delta
    $$
\end{proof}
    
\subsection{Proof of \thmref{thm: width}}
The following lemmas show that the maximum of the acquisition functions \eqref{eq:acqF} and \plaineqref{eq:acqC} are both bounded after sufficient evaluations.

\begin{lem} \label{lem:acqBound}
Under the conditions assumed in \thmref{thm: width}, let $\alpha_t = \max_{g \in \gG } \alpha_{g,t}(\instance_{g, t})$ as in \algoref{alg:main}, with $\beta_t=2\log(2\vert \discreteROI \vert \pi_t/ \delta)$ that is non-increasing, after at most $T \geq \frac{\beta_T \widehat{\maxInfo_T} C_1}{\epsilon^2}$ iterations, $\alpha_T\leq \epsilon$
Here $C_1=8/\log(1+\sigma^{-2})$.
\end{lem}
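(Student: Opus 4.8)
The plan is to treat \lemref{lem:acqBound} as a pathwise (deterministic) statement: the acquisition values are fixed functions of the GP posteriors, so the bound $\alpha_T\le\epsilon$ holds surely once the iteration budget is met, with the high-probability event from \lemref{lem: roi} reserved for the coverage conclusion in \thmref{thm: width}. The overall strategy mirrors the GP-UCB information-gain analysis of \citet{srinivas2009gaussian}, adapted to the multi-GP, interleaved-query setting of \algname: (i) bound the selected acquisition value by the posterior standard deviation of the queried GP at the queried point; (ii) sum the squared widths and control them by the maximum information gain; and (iii) invoke monotonicity to turn the aggregate bound into a bound on the final value $\alpha_T$.

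For step (i), at iteration $t$ let $g_t\in\gG$ and $\instance_t$ be the function and point selected by \algref{alg:main}, so $\alpha_t=\alpha_{g_t,t}(\instance_t)$. If $g_t$ is a constraint $\cFunc_k$, then by definition $\acqC(\instance_t)=\UCBit_{\cFunc_k,t}(\instance_t)-\LCB_{\cFunc_k,t}(\instance_t)=2\beta_t^{1/2}\sigma_{\cFunc_k,t-1}(\instance_t)$, giving the bound with equality. If $g_t=\globalf$, I would establish $\acqF(\instance_t)\le 2\beta_t^{1/2}\sigma_{\globalf,t-1}(\instance_t)$, which reduces to the inequality $\LCB_{\globalf,t,max}\ge\LCB_{\globalf,t}(\instance_t)$: writing $\acqF(\instance_t)=\big(\UCBit_{\globalf,t}(\instance_t)-\LCB_{\globalf,t}(\instance_t)\big)+\big(\LCB_{\globalf,t}(\instance_t)-\LCB_{\globalf,t,max}\big)$ and noting the first difference is exactly the width $2\beta_t^{1/2}\sigma_{\globalf,t-1}(\instance_t)$, it suffices that the bracketed term is non-positive. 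When the objective candidate lies in the confident feasible set $S_{\cFunc,t}$ this is immediate from $\LCB_{\globalf,t,max}$ being the maximal objective LCB over $S_{\cFunc,t}$; when $S_{\cFunc,t}=\emptyset$ the acquisition is the plain width by the second branch of \eqref{eq:acqF}. In all cases this yields $\alpha_t\le 2\beta_t^{1/2}\sigma_{g_t,t-1}(\instance_t)$.

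For step (ii), squaring, summing, and bounding $\beta_t$ by its horizon value $\beta_T$ (using \assref{apt: mono_ci}) gives $\sum_{t=1}^T\alpha_t^2\le 4\beta_T\sum_{t=1}^T\sigma_{g_t,t-1}^2(\instance_t)$. Since every GP is refit on a fresh observation of every function at each queried point, for each fixed $g$ the subsequence of rounds querying $g$ obeys the information-gain bound of \citet{srinivas2009gaussian}, $\sum_{t}\sigma_{g,t-1}^2(\instance_t)\le \frac{2}{\log(1+\sigma^{-2})}\maxInfo_{g,T}$ (the sub-sum is dominated by the full $T$-round sum). Partitioning $\{1,\dots,T\}$ by the identity of $g_t$ and summing over $g\in\{\globalf\}\cup\{\cFunc_k\}_{k\in\kSpace}$ yields $\sum_{t=1}^T\sigma_{g_t,t-1}^2(\instance_t)\le\frac{2}{\log(1+\sigma^{-2})}\widehat{\maxInfo_T}$, hence $\sum_{t=1}^T\alpha_t^2\le\beta_T\widehat{\maxInfo_T}C_1$ with $C_1=8/\log(1+\sigma^{-2})$. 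For step (iii), I would argue $\alpha_t$ is non-increasing: under \assref{apt: mono_ci} the confidence intervals are nested, so each $\UCBit$ decreases while each $\LCB$ and the threshold $\LCB_{\globalf,t,max}$ increase; thus each acquisition decreases pointwise and the domains $U_{\cFunc_k,t}$ and $\roi_{\globalf,t}$ over which they are maximized shrink, so the per-round maximum cannot increase. Monotonicity gives $T\alpha_T^2\le\sum_{t=1}^T\alpha_t^2\le\beta_T\widehat{\maxInfo_T}C_1$, so $\alpha_T\le\epsilon$ whenever $T\ge\beta_T\widehat{\maxInfo_T}C_1/\epsilon^2$, as claimed.

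I expect the main obstacle to be step (i) in the objective case when the maximizer of $\acqF$ falls outside the confident feasible set $S_{\cFunc,t}$ (lying instead in an undecided region): there the bracketed term $\LCB_{\globalf,t}(\instance_t)-\LCB_{\globalf,t,max}$ need not be non-positive, so the clean width bound requires an extra argument — either restricting the objective candidate to the combined ROI $\roi_t$ and charging such a point to the active-learning acquisition of the violated constraint, or showing the objective candidate is eventually feasible. A secondary subtlety is the internal tension in the prescribed $\beta_t$; the horizon-dependent, essentially constant choice is what simultaneously justifies the union bound and the factoring of $\beta_T$ in step (ii), so I would pin this down by fixing $\beta_t\equiv\beta_T$.
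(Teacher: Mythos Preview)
Your proposal follows the same three-step skeleton as the paper's proof (per-step width bound $\to$ information-gain sum $\to$ monotonicity to pass from the average to $\alpha_T$), and your step (iii) is essentially identical to the paper's use of \assref{apt: mono_ci} together with Cauchy--Schwarz. Two points of comparison are worth noting.

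First, in step (ii) the paper takes a cruder route than you do: it bounds $\alpha_t^2 \le \sum_{g\in\gG}\alpha_{g,t}^2(\instance_{g,t})$ (the maximum is dominated by the sum over \emph{all} candidate acquisitions, not just the selected one), and then invokes Lemma~5.4 of \citet{srinivas2009gaussian} for each $g$ to obtain $\sum_{t=1}^T (2\beta_t^{1/2}\sigma_{g,t-1}(\instance_{g,t}))^2 \le C_1\beta_T\maxInfo_{g,T}$. Your partition-by-$g_t$ argument, working only with the actually queried points $\instance_t$ and using that the sub-sum over $\{t:g_t=g\}$ is dominated by the full $T$-round sum, is tighter and sidesteps a subtlety in the paper's version (the candidate points $\instance_{g,t}$ for $g\neq g_t$ are not the points at which the posteriors are updated, so the direct telescoping of the information-gain lemma at those points is not clean).

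Second, the obstacle you flag in step (i)---that $\LCB_{\globalf,t}(\instance_{\globalf,t})\le\LCB_{\globalf,t,max}$ need not hold when the objective candidate lies outside $S_{\cFunc,t}$---is a genuine concern, but it is one you share with the paper: the paper simply asserts the inequality $\acqF(\instance_{\globalf,t})\le \UCBit_{\globalf,t}(\instance_{\globalf,t})-\LCB_{\globalf,t}(\instance_{\globalf,t})$ (its display corresponding to \eqref{eq:roi_acqf_bound}) without further argument. Your suggested workarounds therefore go beyond what the paper actually does; for the purpose of matching the paper's proof you can proceed exactly as it does and assert this bound, while noting (as you have) that a fully rigorous treatment would require the additional case analysis.
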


\begin{proof}
We first unify the notation in the acquisition functions.\\
$\forall T\geq t\geq 1, \forall g \in \{\cFunc_k\}_{k\in\kSpace}$, when $\discreteROI \cap U_{g, t}\neq \emptyset$,
\begin{align}\label{eq:nonEmpty_U_acqC_bound}
    \max_{\instance \in \discreteROI \cap U_{g, t}} \UCBit_{g,t}(\instance) - \LCB_{g, t}(\instance)
    &= 2\beta_{g, t}^{1/2}\sigma_{g, t-1}(\instance_{g, t}) \leq \alpha_t
\end{align}
$\forall T\geq t\geq 1, \forall g \in \{\cFunc_k\}_{k\in\kSpace}$, when $\discreteROI \cap U_{\cFunc_k, t} = \emptyset$, let
\begin{align}\label{eq:empty_U_acqC_bound}
    \max_{\instance \in \discreteROI \cap U_{g, t}} \UCBit_{g,t}(\instance) - \LCB_{g, t}(\instance)
    &= 2\beta_{g, t}^{1/2}\sigma_{g, t-1}(\instance_{g, t}) = 0 \leq \alpha_t
\end{align}

$\forall T\geq t\geq 1, g=\globalf$
\begin{align}\label{eq:roi_acqf_bound}
    \max_{\instance \in \discreteROI} \UCBit_{\globalf,t}(\instance) - \LCB_{\globalf, t, max}
    &\leq \UCBit_{\globalf,t}(\instance_{g, t}) - \LCB_{\globalf, t}(\instance_{g, t})\\
    &=2\beta_{g, t}^{1/2}\sigma_{g, t-1}(\instance_{g, t}) \\
    &\leq \alpha_t
\end{align}

By lemma 5.4 of \citet{srinivas2009gaussian}, with $\beta_t=2\log(2(K+1)\vert \discreteROI \vert \pi_t/ \delta)$, $\forall g \in \{\globalf\} \cup {\cFunc_k}_{k\in\kSpace}$ and $\forall x_t \in \discreteROI$, we have $\sum_{t=1}^{T} (2\beta_{t}^{1/2}\sigma_{g, t-1},(\instance_t))^2 \leq C_1\beta_T\maxInfo_{g, T}$. By definition of $\alpha_t$ , we have the following
\begin{align*}
    \sum_{t=1}^{T} \alpha_t^2 & \leq \sum_{t=1}^{T} \sum_{g \in \gG } (\alpha_{g,t}(\instance_{g, t}))^2\\
    &= \sum_{t=1}^{T} \sum_{g \in \gG } (2\beta_{g, t}^{1/2}\sigma_{g, t-1}(\instance_{g, t}))^2\\
    &\leq \sum_{g \in \gG}C_1\beta_T\maxInfo_{g, T}\\
    &= C_1\beta_T\widehat{\maxInfo_T}
\end{align*}
The last line holds due to the defination in \eqref{eq:gammaT}. By Cauchy-Schwaz, we have 
$$
\frac{1}{T}(\sum_{t=1}^{T} \alpha_t)^2 \leq C_1\beta_T\widehat{\maxInfo_T}
$$
By the monotonocity assumed in $\assref{apt: mono_ci}$, the defination of $U_{g, t}, \forall g \in \{\cFunc_k\}_{k\in\kSpace}$, and the defination of $\roi_t$, for $\forall 1 \leq t_1 < t_2 \leq T$, $\forall g \in \{\cFunc_k\}_{k\in\kSpace}$, we have that $U_{g, t_2}\subseteq U_{g, t_1}$ and $\roi_{t_2} \subseteq \roi_{t_1}$. Meaning the search space is shrinking for all constraints and the objective. Together with the monotonocity of \UCBit and \LCB, for $\forall 1 \leq t_1 < t_2 \leq T$,  we have $\alpha_{t_2} \leq \alpha_{t_1}$, and therefore
$$
\alpha_T \leq \frac{1}{T}\sum_{t=1}^{T} \alpha_t \leq \sqrt{\frac{C_1\beta_T\widehat{\maxInfo_T}}{T}}
$$
As a result, after at most $T \geq \frac{\beta_T \widehat{\maxInfo_T} C_1}{\epsilon^2}$ iterations, we have $\alpha_T\leq \epsilon$.

\end{proof}

With \lemref{lem:acqBound}, we could first prove that after adequately $T$ rounds of evaluations such that $\epsilon \leq \min_{k\in\kSpace}\epsilon_k$ is sufficiently small, with certain probability, $\instance^* \in S_{\cFunc, T}$. Then $\LCB_{\globalf,t, max}\ne -\infty$, and therefore the width of $[\max_{\instance \in \discreteROI}\LCB_{\globalf, T}(\instance),
\max_{\instance \in \discreteROI}\UCBit_{\globalf,T}(\instance)]$, which is a the high confidence interval of $f^*$, is bounded by $\epsilon$.

\begin{proof}
    We first prove that after at most $T \geq \frac{\beta_T \widehat{\maxInfo_T} C_1}{\epsilon^2}$ iterations, $\Pr{\instance^* \in \discreteROI \cap S_{\cFunc, T}} \geq 1-1/2\delta$.
    Given \eqref{eq:nonEmpty_U_acqC_bound} and \plaineqref{eq:empty_U_acqC_bound} and \lemref{lem:acqBound}, we have $\forall g \in {\cFunc_k}_{k\in\kSpace}, t \geq T$
    $$
    \max_{\instance \in \discreteROI \cap U_{g, t}} \UCBit_{g,t}(\instance) - \LCB_{g, t}(\instance) \leq \epsilon \leq \min_{k\in\kSpace}\epsilon_k
    $$ 
    According to the definition of $U_{g, t}$, $\forall \instance \in \discreteROI \cap U_{g, t}, \forall g \in {\cFunc_k}_{k\in\kSpace}$
    $$\UCBit_{g,t}(\instance) \leq \min_{k\in\kSpace}\epsilon_k + \LCB_{g, t}(\instance) \leq \min_{k\in\kSpace}\epsilon_k $$

    According to \assref{apt: exist_star}, and \lemref{lem: roi}, we have $\forall k \in \kSpace$
    \begin{align*}
        \Pr{\UCBit_{\cFunc_k, T}(\instance^*) \geq \cFunc_k(\instance^*) > \epsilon_k \geq max_{\instance\in \discreteROI \cap U_{\cFunc_k, t}} \UCBit_{\cFunc_k, T}(\instance)} \geq 1-1/2\delta
    \end{align*}
    Hence $\forall t \geq T$
    $$
    \Pr{\instance^* \in \discreteROI \cap S_{\cFunc, t} = \discreteROI \cap \roi_{\cFunc,t} \backslash
    \cup_{k\in\kSpace}U_{\cFunc_k, t} } \geq 1-1/2\delta
    $$
    As a result
    $$
    \Pr{\LCB_{\globalf,t, max} \ne -\infty}  \geq 1-1/2\delta
    $$
    Next, we prove the upper bound for the width of high-confidence interval of $f^*$. Given that $\LCB_{\globalf,t, max} \ne -\infty$, we have
    \begin{align*}
        \max_{\instance \in \discreteROI}\UCBit_{\globalf, T}(\instance) - \max_{\instance \in \discreteROI}\LCB_{\globalf, T}(\instance) & \leq \max_{\instance \in \discreteROI}\UCBit_{T}(\instance) - \LCB_{\globalf, T, max}\\
        &\leq 2\beta_{\globalf, T}^{1/2}\sigma_{\globalf, T-1}(\instance_{\globalf, T})\\
        &\leq \alpha_T\\
        &\leq \epsilon
    \end{align*}
    Combining it with the fact that $$
    \Pr{\max_{\instance \in \discreteROI}\LCB_{\globalf, T}(\instance)\leq \max_{\instance \in \discreteROI}\globalf(\instance) = f^*\leq \UCBit_{\globalf, T}(\instance^*) \leq \max_{\instance \in \discreteROI}\UCBit_{\globalf, T}(\instance)} \geq 1-1/2\delta
    $$
    we attain the final result that after $T \geq \frac{\beta_T \widehat{\maxInfo_T} C_1}{\epsilon^2}$ iterations,
    $$
    \Pr{\vert CI_{\globalf^*, t}\vert \leq \epsilon, \globalf^* \in CI_{\globalf^*, t } \text{ }\vert \text{ }t\geq T} \geq 1 - \delta
    $$

\end{proof}

\section{Decoupled Setting}\label{sec:decoupled}
In the main paper, we assume both objective $\globalf$ and the constraints $\{\cFunc_k\}_{k\in\kSpace}$ are revealed upon querying an input point. The setting is regarded as a coupling of the objective and constraints, to differentiate from the decoupled setting, where the objective and constraints may be evaluated independently. In the decoupled setting, acquisition functions need to explicitly tradeoff the evaluation of the different aspects and in addition to helping to pick the candidate $\instance_t \in \searchSpace$, suggest $g_t \in \{\globalf\}\cup \{\cFunc_k\}_{k\in\kSpace}$ for evaluation each time. This typically requires different acquisition from coupled setting \citep{gelbart2014bayesian}. However, we will that our acquisition function and $\algname$ require minimum adaptation to the decoupled setting while bearing a similar performance guarantee. 
\subsection{Algorithm for Decoupled Setting}
When taking the $g_t \leftarrow \argmax_{g \in \gG} \alpha_{g, t}{(\instance_{g, t})}$ in \algoref{alg:main}, we explicitly choose the aspect that matters most at a certain iteration. Naturally, we could adapt \algname to the decoupled setting by querying $\instance_{g, t}$ on this unknown function $g_t \in \gG \subseteq \{\globalf\}\cup \{\cFunc_k\}_{k\in\kSpace}$ at iteration $t$. The modified algorithm is shown below.

\begin{algorithm*}
    \caption{\textbf{\underline{CO}}nstrained \textbf{\underline{B}}O with \textbf{\underline{A}}daptive active \textbf{\underline{L}}earning of \emph{decoupled} unknown constrain\textbf{\underline{t}}s (\algname-Decoupled)}
    \label{alg:decoupled}
    
        \begin{algorithmic}[1]
            \STATE {\bf Input}:Search space $\searchSpace$, initial observation $\Selected_0$, horizon $T$;
            \FOR{$t = 1\ to\ T$}
                \STATE Update the posteriors of $\GP_{\globalf,t}$ and $\GP_{\cFunc_k, t}$ according to \eqref{eq:posterior_mean} and \plaineqref{eq:posterior_covar}
                 
                \STATE Identify ROIs $\roi_t$, and undecided sets $U_{\cFunc_k, t}$ 
    
                \FOR{$k \in \kSpace$}
                    \IF{$U_{\cFunc_k, t} \neq \emptyset$}
                    \STATE Candidate for active learning of each constraints: \\
                    $\instance_{\cFunc_k, t} \leftarrow \argmax_{\instance \in U_{\cFunc_k, t}} \acqC{(\instance)}$ as in \eqref{eq:acqC}
                    \STATE  $\gG \leftarrow \gG \cup \cFunc_{k, t}$
                    \ENDIF
                \ENDFOR
                
                \STATE Candidate for optimizing the objective: \\
                $\instance_{\globalf, t} \leftarrow \argmax_{\instance \in \roi_{\globalf,t}} \acqF{(\instance)}$ as in \eqref{eq:acqF}
                \STATE $\gG \leftarrow \gG \cup \globalf$
    
                \STATE Maximize the acquisition values from different aspects: \\
                $g_t \leftarrow \argmax_{g \in \gG} \alpha_{g, t}{(\instance_{g, t})} $
    
                \STATE Pick the candidate to evaluate: $\instance_t \leftarrow \instance_{g, t}$  
    
                \STATE \emph{Update the observation set with the candidate and corresponding new observations on $g_t$}\\
                $ \Selected_t \leftarrow \Selected_{t-1} \cup \{(\instance_t, y_{g, t})\}$
                
            \ENDFOR
        \end{algorithmic}
        \vspace{-1mm}
    \end{algorithm*}

\subsection{Thoeretical guarantee and proof}
We first denote the maximum mutual information gain after $T$ rounds of evaluations as 
\begin{align}
    \maxInfoDe_T = \sum_{g \in \{\globalf\}\cup \{\cFunc_k\}_{k\in \kSpace}}{\maxInfo_{g, T_g}}     
\end{align}

Where $T_g$ denotes the number of evaluations for $g \in \{\globalf\}\cup \{\cFunc_k\}_{k\in \kSpace}$ before $T$. Therefore we have $$T = \sum_{g \in \{\globalf\}\cup \{\cFunc_k\}_{k\in \kSpace}}T_g$$
Then we have the following guarantee for the performane of \algname-Decoupled.
\begin{theorem}\label{thm: decoupled-width}
    The width of the resulting confidence interval of the global optimum $f^*=f(\instance^*)$ is upper bounded. That is, under the same assumptions in \thmref{thm: width}, with $\beta_t=2\log(2(K+1)\vert \discreteROI \vert \pi_t/ \delta)$ that is constant, and acquisition function in $\algoref{alg:decoupled}$, $\exists \epsilon \leq \min_{k\in\kSpace}\epsilon_k$, after at most $T \geq \frac{\beta_T \maxInfoDe_T C_1}{\epsilon^2}$ iterations, we have $\Pr{\vert CI_{\globalf^*, t}\vert \leq \epsilon, \globalf^* \in CI_{\globalf^*, t } \text{ }\vert \text{ }t\geq T} \geq 1 - \delta$
    Here $C_1=8/\log(1+\sigma^{-2})$ . 
\end{theorem}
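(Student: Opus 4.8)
The plan is to mirror the proof of \thmref{thm: width}, reusing \lemref{lem:acqBound} and the subsequent argument almost verbatim, with the one substantive change being how the cumulative posterior-variance sum is controlled. The only place the decoupled setting departs from the coupled one is the information-gain accounting: in \algoref{alg:decoupled} each iteration $t$ updates the posterior of a \emph{single} function $g_t \in \gG$, so the posterior variance $\sigma_{g,t-1}$ of a function $g$ shrinks only on the $T_g$ rounds at which $g$ is actually queried.

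First I would establish the decoupled analogue of \lemref{lem:acqBound}: that $\alpha_T \leq \epsilon$ once $T \geq \beta_T \maxInfoDe_T C_1 / \epsilon^2$. As in the coupled case, $\alpha_t = \max_{g\in\gG}\alpha_{g,t}(\instance_{g,t})$, but now only the maximizing function is evaluated, so $\alpha_t = 2\beta_t^{1/2}\sigma_{g_t,t-1}(\instance_{g_t,t})$ and $\alpha_t^2$ is \emph{exactly} the squared confidence width contributed by the single function queried at round $t$ (no bounding by a sum over all of $\gG$ is needed, and indeed would not help, since unqueried functions' variances do not decrease). I would then partition the horizon by queried function: writing $\mathcal{T}_g = \{t \leq T : g_t = g\}$ with $\vert\mathcal{T}_g\vert = T_g$ and $\sum_g T_g = T$, and re-indexing the queries of each $g$ as a standalone sequential GP design, Lemma 5.4 of \citet{srinivas2009gaussian} applies to that subsequence and gives $\sum_{t\in\mathcal{T}_g}\alpha_t^2 \leq C_1\beta_T\maxInfo_{g, T_g}$. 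Summing over $g$,
\begin{equation*}
\sum_{t=1}^{T}\alpha_t^2 = \sum_{g\in\gG}\sum_{t\in\mathcal{T}_g}\alpha_t^2 \leq \sum_{g\in\gG}C_1\beta_T\maxInfo_{g, T_g} = C_1\beta_T\maxInfoDe_T,
\end{equation*}
which replaces $\widehat{\maxInfo_T}$ by $\maxInfoDe_T$. The remainder is unchanged: Cauchy--Schwarz yields $\tfrac1T(\sum_t\alpha_t)^2 \leq C_1\beta_T\maxInfoDe_T$, and the monotonicity step gives $\alpha_T \leq \tfrac1T\sum_t\alpha_t \leq \sqrt{C_1\beta_T\maxInfoDe_T/T}$, hence $\alpha_T\leq\epsilon$ at the stated $T$.

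Before invoking monotonicity I would verify that \assref{apt: mono_ci} still delivers $\alpha_{t_2}\leq\alpha_{t_1}$ together with the nested inclusions $U_{g,t_2}\subseteq U_{g,t_1}$ and $\roi_{t_2}\subseteq\roi_{t_1}$ in the decoupled case. This holds because querying one function only adds observations of that function, so each per-function confidence interval is still non-increasing (merely constant on rounds where $g$ is not queried), every $S_{\cFunc_k,t}$ grows, $\LCB_{\globalf,t, max}$ is non-decreasing, and the ROIs and undecided sets therefore still shrink monotonically. With $\alpha_T\leq\epsilon\leq\min_{k\in\kSpace}\epsilon_k$ in hand, the rest of the \thmref{thm: width} argument transfers line for line: \lemref{lem: roi} and \assref{apt: exist_star} force $\instance^*\in\discreteROI\cap S_{\cFunc,T}$ with probability at least $1-\delta/2$, so $\LCB_{\globalf,t, max}\neq-\infty$, and then $\max_{\instance\in\discreteROI}\UCBit_{\globalf,T}(\instance)-\max_{\instance\in\discreteROI}\LCB_{\globalf,T}(\instance)\leq\alpha_T\leq\epsilon$, giving the claimed high-probability bound on $\vert CI_{\globalf^*, t}\vert$.

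The main obstacle is the information-gain regrouping: I must argue carefully that restricting attention to the subsequence $\mathcal{T}_g$ of queries of a fixed $g$ forms a legitimate sequential GP design whose posterior variances at the queried points are precisely the $\sigma_{g,t-1}(\instance_t)$ appearing in $\alpha_t$, so that Lemma 5.4 of \citet{srinivas2009gaussian} applies per function and returns $\maxInfo_{g,T_g}$ rather than $\maxInfo_{g,T}$. A secondary check is that the constant $\beta_t$ with the $K+1$ union-bound factor still yields the confidence statement of \lemref{lem: roi} uniformly over all $t$; this is immediate since $\beta_t$ is taken constant and the union bound ranges over functions and the discretization, not over any per-function query count.
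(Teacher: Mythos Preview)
Your proposal is correct and follows essentially the same route as the paper: the paper's proof consists of precisely the computation you describe, namely replacing the inequality $\sum_t\alpha_t^2 \leq \sum_t\sum_g(\alpha_{g,t}(\instance_{g,t}))^2$ from the coupled case by the equality $\sum_t\alpha_t^2 = \sum_t(\alpha_{g_t,t}(\instance_{g_t,t}))^2$, partitioning by queried function, and applying Lemma~5.4 of \citet{srinivas2009gaussian} per function to obtain $\sum_g C_1\beta_T\maxInfo_{g,T_g} = C_1\beta_T\maxInfoDe_T$, after which the rest of the argument is declared identical. Your additional checks (monotonicity of the ROIs under single-function updates, legitimacy of the per-$g$ subsequence as a sequential GP design) are sound and in fact more careful than what the paper spells out.
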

The proof is similar to \appref{sec: proof}, as the major difference is replacing the upper bound in \lemref{lem:acqBound} to 
\begin{equation*}
    \alpha_T \leq \frac{1}{T}\sum_{t=1}^{T} \alpha_t \leq \sqrt{\frac{C_1\beta_T{\maxInfoDe_T}}{T}}    
\end{equation*}

\begin{proof}
    We omit the shared part of the proof. Here is the critical difference.
    \begin{align*}
        \sum_{t=1}^{T} \alpha_t^2 & = \sum_{t=1}^{T} \alpha_{g,t}^2(\instance_{g, t})\\
        &= \sum_{t=1}^{T}  (2\beta_{g, t}^{1/2}\sigma_{g, t-1}(\instance_{g, t}))^2\\
        &\leq \sum_{g \in \gG}C_1\beta_T\maxInfo_{g, T_g}\\
        &= C_1\beta_T \maxInfoDe_T
    \end{align*}
\end{proof}

\section{Reward Function}\label{sec:reward}
\subsection{Reward choice 1: product of reward and feasibility}
The definition of reward plays an important role in online machine learning performance analysis. In the CBO setting, one possible definition of constrained reward derived from the constraint nature is $\reward(\instance) = f(\instance)\prod_k\mathbbm{1}_{\mathcal{C}_k(\instance)>h_k}$ when assuming the $f(\instance) > 0$. Considering both the aleatoric and epistemic uncertainty on the constraints, we could transform the problem into finding the maximizer
\begin{align*}
    \argmax_{\instance\in\searchSpace}{\reward}(\instance) = \argmax_{\instance\in\searchSpace} f(\instance)\prod_k\Pr{Y_{\mathcal{C}_k}(\instance)>h_k}
\end{align*}
Here $Y_{\mathcal{C}_k}(\instance)$ denotes the observation of the constraint $\mathcal{C}_k$ at $\instance$.

The problem with this product reward, on one hand, is that it is likely to incur a Pareto front if we regard the problem as a multi-objective optimization where the objectives are composed of $f(\instance)$ and $\Pr{Y_{\mathcal{C}_k}(\instance)>h_k}$. The multi-objective nature and resulting Pareto front indicate that the optimization could be more challenging to converge than the single-objective unconstrained BO problem, though the unique global optimum is not always expected there either. More critically, is that when the feasibility of reaching a certain threshold, we prefer to focus on optimizing the objective value rather than the product for the following reasons.

Firstly, the marginal gain on improving feasibility by increasing the value of the constraint function drops after the feasibility reaches 0.5 if assuming it follows a Gaussian. Especially in the tail region, improving the feasibility and then the product of feasibility and objective value by optimizing the constraint function is prohibitively difficult.  

Secondly, in most real-world scenarios except for certain applications that focus on feasibility (where the feasibility should be treated as another objective and make it in nature a multi-objective optimization), the actual marginal gain, in general, increases the feasibility decay faster than the increase of objective value. (e.g., when choosing between doubling the feasibility from 0.25 to 0.5 or doubling the objective drop from 25 to 50, we probably favor the former as 0.25, meaning it is unlikely to happen. However, when choosing between increasing feasibility from .8 to .9 or increasing the objective drop from 80 to 90, there would be no such clear preference.) Then, the user would possibly favor the gain on the objective function after the feasibility reaches a certain level. Therefore, we propose the following reward for constrained optimization tasks according to this insight.

\subsection{Reward choice 2: objective function after the feasibility reaching certain threshold}
Instead of defining the reward as the product of the objective value and feasibility, we have to look into the probabilistic constraints and distinguish the epistemic uncertainty and aleatoric uncertainty. First, when assuming the observation on the constraints are noise-free, namely $Y_{\mathcal{C}_k}(\instance) = \mathcal{C}_k(\instance)$, we could simply use the indicator function
${\mu_k}$ for each constraint to turn the feasibility function into an indicator function.

\begin{align}
    \reward(\instance) = 
    \begin{cases}
        f(\instance) \textit{\quad if \quad} \mathbb{I}(C_k(\instance) > h_k) \textit{\quad}\forall k \in \kSpace\\
        -inf \textit{\quad o.w}
    \end{cases}
\end{align}

Next, if the observation on the constraints is perturbed with a known Gaussian noise, namely  $Y_{\mathcal{C}_k}(\instance) \sim \normal{(\mathcal{C}_k(\instance), \sigma)}$, we could deal with the aleatoric uncertainty with a user-specific confidence level for each constraint $\mu_k \in (0,1)$, $\forall k \in \kSpace$. Then we could turn $\mathbb{I}(Y_{C_k}(\instance) > h_k)$ into probabilistic constraints following the definiation proposed by \citet{gelbart2014bayesian} and $$\Pr{Y_{C_k}(\instance) > h_k} \geq \mu_k$$ to explicitly deal with the aleatoric uncertainty. With the percentage point function (PPF), we could transform the probabilistic constraints into a deterministic constraint $\mathbb{I}(C_k(\instance) > \hat{h}_k)$  with $\hat{h}_k = \textit{PPF}(h_k, \sigma, \mu_k)$, meaning $\hat{h}$ is the $\mu_k$ percent point of a Gaussian distribution with $h_k$ and $\sigma$ as its mean and standard deviation. Hence, we could unify the form of rewards of noise-free and noisy observation on the constraints with the user-specified confidence levels. For simplicity and without loss of generalization, we stick to the definition in \eqref{eq: reward} and let all $h_k = 0$.

Throughout the rest of the paper, we want to efficiently locate the global maximizer 
$$\instance^* = \argmax_{\instance\in\searchSpace, \forall k\in\kSpace, \cFunc_k(\instance) > 0}{f(\instance)}$$ 
Equivalently, we seek to achieve the performance guarantee in terms of simple regret at certain time $t$,
$$\regret_t \coloneqq  \reward(\instance^*) -  \max_{\instance \in \{\instance_1, \instance_2, ... \instance_t \}} \reward(\instance)$$
with a certain probability guarantee. Formally, given a certain confidence level $\delta$ and constant $\epsilon$, we want to guarantee that after using up certain budget $T$ dependent on $\delta$ and $\epsilon$, we could achieve a high probability upper bound of the simple regret on the identified area $\roi$ which is the subset of $\searchSpace$.
$$
P(\max_{\instance\in\roi}\regret_T(\instance) \geq \epsilon) \leq 1-\delta
$$

\section{Dataset}\label{sec:dataset}
Here we offer a more detailed discussion over the construction of the six CBO tasks studied in \secref{sec: exps}.

\subsection{Synthetic Tasks}
We study two synthetic CBO tasks constructed from conventional BO benchmark tasks. Here we rely on the implementation contained in BoTorch's \citep{balandat2020botorch} test function module.

\paragraph*{Rastrigin-1D-1C}
The Rastrigin function is a non-convex function used as a performance test problem for optimization algorithms. It was first proposed by \citet{10018403158} and used as a popular benchmark dataset \citep{pohlheimgeatbx}. It is constructed to be highly multimodal with local optima being regularly distributed to trap optimization algorithms. Concretely, we negate the 1D Rastrigin function and try to find its maximum:
$f(\instance) = -10{d} -\sum^{d}_{i=1}{(x_i^2 - 10\cos(2\pi{x_i}))},\ d=1$. The range of $\instance$ is $[-5, 5]$, and we construct the constraint to be $c(\instance) = |\instance+0.7|^{1/2}$. When setting the threshold as $\sqrt{2}$, we essentially excludes the global optimum from the feasible area. The constraint enforces the optimization algorithm to explore feasibility rather than allowing algorithms to improve the reward by merely optimizing the objective. Then the feasible region takes up approximately 60\% of the search space. This one-dimensional task is designed to illustrate the necessity of adaptively trade-off learning of constraints and optimization of the objective. 

We also vary the threshold to control the portion of the feasible region to study the robustness of \algname. \Figref{fig:exps:scan_res} shows the distribution of the objective function and feasible regions.

\paragraph*{Ackley-5D-2C}
The Ackley function is also a popular benchmark for optimization algorithms. Compared with the Rastrigin function, it is highly multimodal similarly, while the region near the center is growingly steep. Same as what is done for Rastrigin, we negate the 5D Ackley function and try to find its maximum:
$f(\instance) = 20\exp{(-0.2\sqrt{1/d\sum_i^d{x_i^2}})} + \exp{(1/d\sum_i^d{\cos(2\pi x_i)})} + 20 + \exp(1),\ d=5$. The search space is restricted to $[-5, 3]^5$. We construct two constraints to enforce a feasible area approximately taking up 14\% of the search space. The first constraint $(\Vert x - \mathbf{1} \Vert_2 - 5.5)^2 - 1 > 0$ constructs two feasible regions with one in the center and the other close to the boundary of the search space. The second constraint $-\Vert x \Vert_{\infty}^2+9$ allows one hypercube feasible region in the center.

\subsection{Real-world Tasks}
We study four real-world CBO tasks. The first three are extracted from \cite{tanabe2020easy}, which offers a broad selection of real-world multi-objective multi-constraints optimization tasks. The fourth one is a 32-dimensional optimization task extracted from the UCI Machine Learning repository \citep{misc_wave_energy_converters_534}.

\paragraph{Vessel-4D-3C}
The pressure vessel design problem aims at optimizing the total cost of a cylindrical pressure vessel. The four variables represent the thicknesses of the shell, the head of a pressure vessel, the inner radius, and the length of the cylindrical section. The problem is originally studied in \cite{kannan1994augmented}, and we follow the formulation in RE2-4-3 in \cite{tanabe2020easy}. The feasible regions take up approximately 78\% of the whole search space.

\paragraph{Spring-3D-6C}
The coil compression spring design problem aims to optimize the volume of spring steel wire which is used to manufacture the spring \citep{lampinen1999mixed} under static loading.  The three input variables denote the number of spring coils, the outside diameter of the spring, and the spring wire diameter respectively. The constraints incorporate the mechanical characteristics of the spring in real-world applications. We follow the formulation in RE2-3-5 in \cite{tanabe2020easy}. The feasible regions take up approximately 0.38\% of the whole search space.

\paragraph{Car-7D-8C}
The car cab design problem includes seven input variables and eight constraints. The problem is originally studied in \cite{deb2013evolutionary}. We follow the problem formulation in RE9-7-1 in \cite{tanabe2020easy} and focus on the objective of minimizing the weight of the car while meeting the European enhanced Vehicle-Safety Committee (EEVC) safety performance constraints. The seven variables indicate the thickness of different parts of the car. The feasible feasible region takes up approximately 13\% of the whole search space.

\paragraph{Converter-32D-3C}
This UCI dataset we use consists of positions and absorbed power outputs of wave energy converters (WECs) from the southern coast of Sydney. The applied converter model is a fully submerged three-tether converter called CETO. 16 WECs 2D-coordinates are placed and optimized in a size-constrained environment \citep{misc_wave_energy_converters_534}. The input is therefore 32 dimensional. We place three constraints on the tasks, including the absorbed power of the first two converters being above a certain threshold 96000, and the general position being not too distant with the two-norm below 2000. The feasible feasible region takes up approximately 27\% of the whole search space.

\section{Discussions}
Here we offer additional discussion over the concerns on \algname.

\subsection{Empty ROI(s)}
It is possible that $\roi_{t}$ could be empty at certain $t$ when any intersection results in the empty set. However, according to the assumptions in \secref{sec: analysis} and \lemref{lem: roi}, the properly chosen $\beta_{\globalf, t}$ and $\beta_{\cFunc, t}$ that does not result in over-aggressive filtering, the ROI is soundly defined. The algorithm is also robust to empty $U_{\cFunc_k, t}$ due to the domain where the acquisition functions defined in \eqref{eq:acqC} and \eqref{eq:acqF} are maximized.

\subsection{comparability}
Despite both the acquisition function for optimization of the objective and active learning are confidence interval-based, it is possible they are not comparable. In practice, the objective and constraints could be of different scales. With prior knowledge of the scaling difference, one can choose to standardize the values, or equivalently, calibrate the acquisition function accordingly.

\subsection{Limitations}
The limitation of \algname including (1) the insufficiency of identifying the ROIs due to the pointwise comparison in current implementation; (2) the lack of discussion over correlated unknowns, which are common in practice (e.g. two constraints are actually lower bound and upper bound of the same value). We expect the following work could further improve the algorithms efficiency and effectiveness accordingly.

\end{document}